\newtheorem{theorem}{Theorem}
\title{Cheap and Deterministic Inference for Deep State-Space Models of Interacting Dynamical Systems}
\author{\name Andreas Look \email andreas.look@bosch.com \\
      \addr Bosch Center for Artificial Intelligence
      \AND
      \name Melih Kandemir \email kandemir@imada.sdu.dk \\
      \addr  University of Southern Denmark
      \AND
      \name Barbara Rakitsch \email barbara.rakitsch@bosch.com \\
      \addr Bosch Center for Artificial Intelligence
      \AND
      \name Jan Peters \email peters@ias.informatik.tu-darmstadt.de \\
      \addr  Technical University Darmstadt
      }
\begin{document}

\maketitle

\begin{abstract}
Graph neural networks are often used to model interacting dynamical systems since they gracefully scale to systems with a varying and high number of agents. 
While there has been much progress made for deterministic interacting systems, modeling is much more challenging for stochastic systems in which one is interested in obtaining a predictive distribution over future trajectories. 
Existing methods are either computationally slow since they rely on Monte Carlo sampling or make simplifying assumptions such that the predictive distribution is unimodal.
In this work, we present a deep state-space model which employs graph neural networks in order to model the underlying interacting dynamical system. The predictive distribution is multimodal and has the form of a Gaussian mixture model, where the moments of the Gaussian components can be computed via deterministic moment matching rules. 
Our moment matching scheme can be exploited for sample-free inference, leading to more efficient and stable training compared to Monte Carlo alternatives. 
Furthermore, we propose structured approximations to the covariance matrices of the Gaussian components in order to
scale up to systems with many agents.
We benchmark our novel framework on two challenging autonomous driving datasets.
Both confirm the benefits of our method compared to state-of-the-art methods.
We further demonstrate the usefulness of our individual contributions in a carefully designed ablation study and provide a detailed runtime analysis of our proposed covariance approximations.
Finally, we empirically demonstrate the generalization ability of our method by evaluating its performance on unseen scenarios.

\end{abstract}

\section{Introduction}
Many dynamical systems, such as traffic flow  \citep{li2018diffusion, stgcnn}, fluid dynamics \citep{ummenhofer2019lagrangian} or human motion \citep{Jain2016StructuralRNNDL}, involve interactions between agents. 
 \textit{Graph Neural Networks} (GNN) \citep{GNN_Review} have recently emerged as a powerful tool in these settings, since they allow learning the dynamics of interacting systems from data only.
Recent research has made great advances for modeling deterministic complex systems by being able to extrapolate from systems with a small number of agents and short time horizons to systems with a high number of agents and long time horizons.
These methods have been successfully applied to a number of physical systems covering fluids, rigid solids and deformable materials~\citep{sanchez2020learning}.
However, for many real-world applications, predicting a single future trajectory for each agent is not enough, since the stochasticity in the dynamical system has significant consequences.
For instance, in autonomous driving,  the driver's intention (e.g. overtaking, turning, lane changing) is a hidden factor that may induce different modes of driving trajectories.

\begin{figure}[ht]
    \centering
    \includegraphics[width=1\columnwidth ]{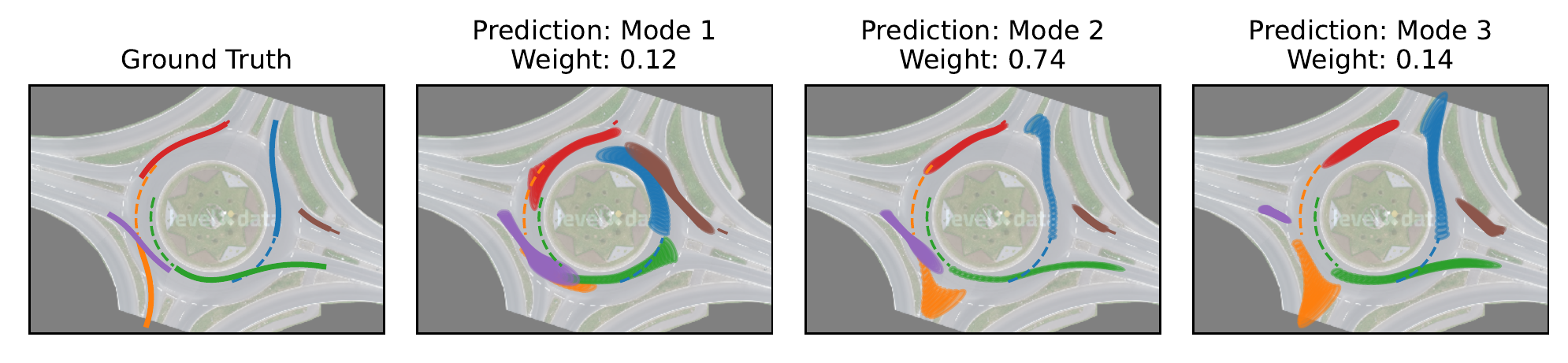}
    \caption{
    We approximate  the predictive distribution of a latent \textit{Graph Deep State-Space Model} (GDSSM) as a Gaussian mixture distribution via deterministic moment matching rules.
    Given historical information in the form of observed trajectories (dashed lines), our proposed GDSSM architecture predicts the future dynamics while taking interactions between traffic participants into account.
    We present the true future trajectories (left-most plot) as solid lines. For each mode and traffic participant, we show the predicted 95\% confidence interval of our model (three right-most plots).
Our model accounts for interactions, for example, the brown vehicle is only entering the roundabout if the blue vehicle is staying in the roundabout (Mode 1). 
If the blue vehicle is leaving the roundabout, the entering lane for the brown vehicle is blocked by the blue vehicle (Mode 2, Mode 3) and the brown car has to wait. 
The ground truth data and the map come from the rounD dataset \citep{rounDdataset}.}
\label{fig:1}
\end{figure}

In the deep learning literature, multiple architectures have been successfully applied to time-series data with two prominent classes being
a) recurrent methods that apply a fixed transition model repeatedly at each time step (e.g. \cite{hochreiter1991untersuchungen}, \cite{chung2014empirical}),
b) history-based methods that aggregate information from the past using either convolutional filters (e.g. \cite{bai2018empirical}, \cite{oord2016wavenet}) or attention modules (e.g. \cite{vaswani2017attention}, \cite{li2019enhancing}).
Recurrent methods capture the internal state of the system at each time point $t$ in a latent state $x_t$.
Predicting the output in this manner respects the causal order of the dynamical
system, i.e. the latent state $x_t$ of time point $t$ is needed in order to compute the latent state $x_{t+1}$ at the next time point $t + 1$. 
State-space models (e.g. \cite{bayesian_filtering}) belong to the class of recurrent methods. They are defined by two probability distributions:
the transition model, $p(x_{t+1} \vert x_t)$, that propagates the latent state forward in time and the emission model, $p(y_t \vert x_t)$, that maps the latent state into the observational space.
Obtaining multistep-ahead predictions for state-space models requires propagating distributions in the latent space. 
For general transition models, this operation cannot be done in closed-form and existing methods either use numerical integration schemes (e.g. \cite{solin2021scalable}, \cite{dinsde}) or Monte Carlo methods (e.g. \cite{krishnan_inference}, \cite{bayer2021mind}).
For interacting systems, the latent space grows linearly with the number of agents, requiring a high number of MC samples, which can make these methods prohibitively slow.
To the best of our knowledge, numerical integration schemes have not been explored for multi-agent dynamical systems.
In contrast, probabilistic history-based methods directly predict the distribution over future trajectories, mitigating the sampling overhead. 
However, this approach make the learning problem hard as the model needs to learn the future distribution for multiple time steps ahead. 
To account for this, complex models (e.g. large neural networks) are often necessary. This can prevent their usage in embedded systems with limited memory capacity.

In this work, we present a novel approach for modeling stochastic dynamical systems with interacting agents that is able to generate expressive multi-modal predictive distributions over future trajectories in a runtime and memory efficient manner.
We model the unknown stochastic dynamical system as a \textit{Deep State-Space Model} (DSSM) in which the shared dynamics of all agents are modeled in a joint latent space using GNNs.
Our model belongs to the family of recurrent neural networks and we replace the expensive MC operations during training and testing by introducing a novel deterministic moment matching scheme.
Prior work \citep{dinsde} on moment matching for dynamical systems does not consider interacting systems and is restricted to unimodal processes.
We overcome the first limitation by applying GNNs in the transition model and the second limitation by placing a  \textit{Gaussian Mixture Model} (GMM) over the initial latent state.
For each mixture component, we independently apply our  moment matching rules in order to arrive at multimodal predictive distributions over future trajectories.
In autonomous driving, the initial latent state is often estimated from historical information and can provide information about the drivers’ intentions \citep{multiple_futures}.
Conditioned on the initial latent state, the predictive distribution can often be accurately modeled with an unimodal distribution \citep{CuiRCLNHSD19, multipath}. 
Finally, as there exists a wide variety of dense traffic scenarios, the high number of agents can result in prohibitively large GMM covariance matrices as their size grows quadratically with the number of traffic participants. We address this problem by proposing structured covariance approximations.

We summarize our contribution as follows:
\begin{itemize}
    \item We derive output moments for GNN layers, which makes GNNs applicable to moment matching algorithms. This leads to the first deterministic inference scheme for deep state-space models for interacting systems.
    \item We introduce a GMM distribution over the initial latent states that results in multimodal predictive distributions over future trajectories.
    \item We propose structured approximations to the GMM covariance matrices that can reduce the computational complexity of our approach from $O(M^3)$ to $O(M^2)$, where $M$ is the number of agents.
\end{itemize}

In our experiments, we benchmark our proposed model on two challenging autonomous driving datasets.
Our results demonstrate that our deterministic model 
has strong empirical performance 
compared to state-of-the-art alternatives. 
We visualize the predictive output distribution for a real-world traffic scenario on a roundabout with multiple agents in Fig. \ref{fig:1}. 
The future distribution is highly multi-modal, as traffic participants can leave the roundabout from several exits. 
Our model is capable of predicting multiple modes, which we efficiently approximate as a GMM, and takes interactions into account using GNNs.

To gain further insights into our model and inference scheme, we carefully examine the impact of the individual contributions of our work in an ablation study.
Next, we provide an empirical runtime study of our covariance approximations.
Our findings indicate that sparse covariance approximations reduce the computational complexity by a factor of up to 100, which makes them favourable for applications with limited computational resources. 
We conclude our experiments by studying the generalization capabilities of our model on out-of-distribution data, e.g. traffic environments that have not been observed during training.


\section{Background}
\label{sec:background}

In this chapter, we first provide background on (deep) state-space models for single-agent systems. 
We proceed by giving a small recap on moment matching rules that allow us to propagate the latent dynamics forward in time in a deterministic manner.  
Next, we review graph neural networks for interaction modeling.
State-space models and graph neural networks  form the basis for our new model for stochastic dynamical interacting systems, which we introduce in Sec.~\ref{sec:graph_dlgm}.
The deterministic moment matching rules build the foundation of our training and testing routines that we describe in Sec.~\ref{sec:det_gdlgm}.

\subsection{Deep State-Space Models}
\label{sec:deep_latent_gaussian_model}
 \textit{State-Space Models} (SSM) are a model class for dynamical systems (e.g. \cite{schon2011system}, \cite{bayesian_filtering}) that assume that each $D_y$-dimensional observed variable $y_t\in\mathbb{R}^{D_y}$  is emitted by a latent  $D_x$-dimensional latent variable $x_t\in\mathbb{R}^{D_x}$. The latents are coupled via first-order Markovian dynamics, e.g. the state at time point $x_t$ only depends on the state of the previous time point $x_{t-1}$.
Typically the observed state $y_{t-1}$ does not contain all necessary information in order to reliably predict the next observed state $y_{t}$. 
Consider the case of traffic forecasting in which the observed state $y_t$ contains the position of a vehicle but not its velocity or acceleration data. We can then supply the latent state $x_t$ with the missing information in order to  allow for accurate forecasts about the next time point. 
Consequently, SSMs are a flexible model class that allows us to make reliable forecasts about complex systems.

A \textit{Deep State-Space Model} (DSSM) is a non-linear SSM in which the transition model, that maps the latent state from the current to the next time point, and the emission model, that maps the latent state to the outputs, are realized by neural networks. 
They come in handy for applications in which the true underlying dynamics have an unknown non-linear functional form  and must be estimated from data.
Assuming additive Gaussian noise (e.g. \cite{krishnan_inference}), their generative model can be written down as follows
\begin{align}
x_0     &\sim p(x_0|\mathcal{I}), \label{eq:dlgm} \\
x_t & \sim \mathcal{N} \left(x_t | x_{t-1} + f(x_{t-1}, \mathcal{I}),  \text{diag} \left(L(x_{t-1}, \mathcal{I}) \right) \right),  &t= 1, \ldots, T \label{eq:transition} \\
y_t &\sim \mathcal{N} \left(y_t|g(x_t), \text{diag} \left( \Gamma(x_t) \right) \right),&t=1, \ldots, T  \label{eq:emission}
\end{align}
where $\mathcal{I} \in \mathbb{R}^{D_\mathcal{I}}$ is the context variable that encodes auxiliary information, such as historical or relational information.
The mean update $f(x_t, \mathcal{I}): \mathbb{R}^{D_x} \times \mathbb{R}^{D_\mathcal{I}} \rightarrow \mathbb{R}^{D_x}$, governing the deterministic component of the transition model, is parameterized by a neural net with an arbitrary architecture. Similarly, the variance update $L(x_t, \mathcal{I}): \mathbb{R}^{D_x} \times \mathbb{R}^{D_\mathcal{I}}\rightarrow  \mathbb{R}_+^{D_x}$ is parameterized by another neural net, which models the stochasticity of the system. 
Both $f$ and $L$ are neural networks, which are parameterized by $\theta_f$ and $\theta_L$.
The emission model follows a Gaussian distribution with mean $g(x_t): \mathbb{R}^{D_x} \rightarrow \mathbb{R}^{D_y}$ and variance $\Gamma(x_t):\mathbb{R}^{D_x} \rightarrow \mathbb{R}_+^{D_y}$, where $g$ and $\Gamma$ are both neural networks with arbitrary architecture and parameters $\theta_g$ and $\theta_\Gamma$.

Assuming additive Gaussian noise allows us to interpret the transition model [Eq.~\eqref{eq:transition}] as a discretized neural stochastic differential equation \citep{nsde_raginsky, dinsde}. 
While we do not pursue this line of work any further, we note that this connection allows for straight-forward extensions to irregularly sampled time series.
Finally, there exists work that couples state-space models with recurrent neural networks \citep{vrnn, fraccaro2016sequential} whose gating mechanism can help in learning long-term effects.

\subsection{Transition Kernel}
\label{subsec:det_gdlgm_bmm}
In this section, we provide background on how to compute the $t$-step transition kernel, $p(x_t \vert x_0, \mathcal{I})$ with $t>1$, which allows us to propagate the latent state forward in time for general state-space models.
It is defined by the following recurrence
\begin{align}
\label{eq:transition_kernel_recursion}
    p(x_{t}|x_0,\mathcal{I}) &= \int p(x_t|x_{t-1},\mathcal{I})p(x_{t-1}|x_0,\mathcal{I}) d x_{t-1},
\end{align}
where $p(x_0 \vert \mathcal{I})$ follows Eq.~\eqref{eq:dlgm}.
It is worth noting that Eq.~\eqref{eq:transition_kernel_recursion} cannot be computed in closed-form since the distribution $p(x_{t-1}|x_0,\mathcal{I})$ has to be propagated through the non-linear transition model $p(x_t|x_{t-1}, \mathcal{I})$ [Eq.~\ref{eq:transition}].

\subsubsection{Overview} 
Various approximations to the transition kernel [Eq.~\eqref{eq:transition_kernel_recursion}] have been proposed that can be roughly split into two groups: 
(a) MC sampling based approaches \citep{brandt, petersen_95, ll_inference_mcmc} and
(b) deterministic approximations based on assumed densities \citep{saerkkae_inference}. 
While MC based approaches can, in the limit of infinitely many samples, approximate arbitrarily complex distributions, they are often slow in practice and their convergence is difficult to assess.
In contrast, deterministic approaches often build on the
assumption that the $t$-step transition kernel can be approximated by a Gaussian distribution.
This assumption can be justified if the transition model can be locally linearly approximated and the observations are sufficiently densely sampled. 
The transition kernel is then computed in an iterative manner by applying a Gaussian approximation at each time step and propagating the moments along the time direction using a numerical integration scheme \citep{saerkka_smoothing, saerkkae_inference}.

Prior work in the context of neural SDEs \citep{dinsde} proposes to first discretize the differential equations and afterwards apply moment matching through the neural network layers as numerical integration scheme.
Since the resulting algorithm propagates moments through  time and neural network layers, it is termed \textit{Bidimensional Moment Matching} (BMM).
This approach was shown to be superior over standard numerical integration schemes in terms of compute and accuracy.
Since the transition model in SSMs can be interpreted as 
discretized SDEs \citep{saerkkae_inference}, we build  our work on their approach.
Moment propagation through neural network layers has also been applied previously in the context of expectation propagation \citep{pbp, ghosh}, deterministic variational inference \citep{wu2018deterministic}, and evidential deep learning \citep{bedl}.

\subsubsection{Bidimensional Moment Matching}
In this section, we recapitulate the original \textit{Bidimensional Moment Matching} (BMM) algorithm of \cite{dinsde} and its use for propagating the latent state forward in time [Eq.~\eqref{eq:transition}].
BMM approximates the transition kernel $p(x_{t}|x_0, \mathcal{I})$ by combining horizontal moment matching along the time axis with vertical moment matching across the neural network layers. 

\paragraph{Horizontal Moment Matching} In order to facilitate the computation of the transition kernel, we replace $p(x_t|x_0, \mathcal{I})$ for all time steps $t=1, \ldots, T$ with a Gaussian distribution 
\begin{align}
    p(x_{t}|x_0,\mathcal{I}) &= \int p(x_t|x_{t-1},\mathcal{I})p(x_{t-1}|x_0,\mathcal{I}) d x_{t-1} \label{eq:assumed_density}\\
    &\approx \mathcal{N}(x_{t}|\mu_{t}( \mathcal{I}),\Sigma_{t}( \mathcal{I})), \nonumber
\end{align}
with mean $\mu_{t}( \mathcal{I})$ and covariance $\Sigma_{t}( \mathcal{I})$.
This approximation simplifies the problem to calculating the first two moments of the transition kernel and is assumed to work well if the dynamics can be locally approximated by a linear model, which is the case for many applications.
Assuming that the one-step transition kernel $p(x_t|x_{t-1}, \mathcal{I})$ follows Eq. \eqref{eq:transition}, the mean $\mu_{t}(\mathcal{I})$ and covariance $\Sigma_{t}(\mathcal{I})$ can be computed as a function of prior moments $\mu_{t-1}(\mathcal{I})$ and $\Sigma_{t-1}(\mathcal{I})$ 
\citep{dinsde}
\begin{align}
  \mu_{t}(\mathcal{I})  &=  \mu_{t-1}( \mathcal{I}) +  \mathbb{E} [f(x_{t-1},  \mathcal{I})], \label{eq:moment_matching_update_rules} \\
    \Sigma_{t}(\mathcal{I}) &= \Sigma_{t-1}( \mathcal{I})+ \mathrm{Cov}[f( x_{t-1},  \mathcal{I})] +
    \mathrm{Cov}[ x_{t-1}, f( x_{t-1}, \mathcal{I}), ]  +   
     \mathrm{Cov}[ x_{t-1}, f( x_{t-1},  \mathcal{I})]^T  + 
     \text{diag} \left( \mathbb{E}[{L}(x_{t-1},  \mathcal{I})] \right) , \nonumber
\end{align}
where $\mathrm{Cov}[x_{t-1}, f( x_{t-1},  \mathcal{I})]$ denotes the cross-covariance between the random vectors in the arguments. 

\paragraph{Vertical Moment Matching} The mean $\mathbb{E} [f(x_{t-1}, \mathcal{I})]$ and covariance $\mathrm{Cov}[f_\theta( x_{t-1},  \mathcal{I})]$ of the transition function, as well as the expected variance update $\mathbb{E}[{L}(x_{t-1},  \mathcal{I})]$, can be computed as a result of moment propagation through neural network layers.
For many common layers, including affine transformations and ReLU activation functions, the corresponding output moments can be either computed in closed-form or good approximations are available in the literature \citep{wu2018deterministic}.
In contrast, approximating the cross-covariance
$\mathrm{Cov}[x_{t}, f_\theta( x_{t}, \mathcal{I})]$ cannot be achieved using moment matching rules since we cannot decompose the cross-covariance term into layerwise operations.
Instead, we resort to Stein's Lemma using
\begin{equation}
    \mathrm{Cov}[x_{t}, f( x_{t}, \mathcal{I})] = \mathrm{Cov}[x_{t}]\mathbb{E}[\nabla_{x_{t}} f( x_{t}, \mathcal{I})],
\end{equation}
where the expected Jacobian can be approximated as \citep{dinsde}
\begin{equation}
    \mathbb{E}[\nabla_{x_{t}} f( x_{t},  \mathcal{I})] \approx \prod_{l=1}^{L} \mathbb{E}  [J_t^l  ].
\end{equation}
Above, $J_t^l$ denotes the Jacobian  of layer $l$ at time step $t$. The expectation of the Jacobian is analytically available or can be closely approximated for common layer types.

\subsection{Graph Neural Networks}
\label{sec:graph_neural_networks}

\textit{Graph neural networks} (GNNs) have emerged as a powerful method for interaction modeling  \citep{GNN_Review, GraphSAGE, MPNN}. 
Given a set of agents and relational information in form of a graph, each agent corresponds to one node in the graph that is equipped with a set of features. 
The relation between the agents is encoded via the edges and information exchange between the agents takes place by sending messages along the edges.
By performing multiple rounds of message-passing, information can flow along the graph. This allows for interactions between non-adjacent agents, provided that a path between the agents exists.

More formally, we define the structure of our GNN as follows.
For $M$ agents, a GNN receives as inputs a set of node features $x = \{x^m\}_{m=1}^M$, where $x \in \mathbb{R}^{MD_x}$ and  $x^m \in \mathbb{R}^{D_x}$, and a set of edges $\mathcal{E}=\{e^{m, m'}\}_{m,m'=1}^{M}$ which is part of the context variable $\mathcal{I} \in \mathbb{R}^{D_\mathcal{I}}$.
The edge attribute $e^{m,m'}$ has a binary encoding, where $e^{m,m'} = 1$ if agent $m$ and agent $m'$ are related.
The GNN output is an update of the node features, i.e. $z = \text{GNN}(x, \mathcal{I})$ with $z \in \mathbb{R}^{ MD_{z}}$, and consists of the following two steps 
that may be repeated multiple times:
\begin{enumerate}
\item 
For each agent $m$, receive message $x^{\mathcal{N}_m}\in \mathbb{R}^{D_{x}}$ by aggregrating information from neighboring agents:
\begin{equation}
x^{\mathcal{N}_m} = \text{AGG} \left(\{x^{m'} \vert  e^{m, m'}=1\}  \right) \in \mathbb{R}^{D_{x}}, \label{eq:aggregation}
\end{equation}
where $\{x^{m'} \vert  e^{m, m'}=1\}$ denotes the set of all neighbours of node $m$.
The aggregation operation $\text{AGG}$ is permutation invariant, i.e. it does not change when the ordering of the inputs is swapped and generalizes to a varying number of inputs.
A commonly used aggregation operation that we also apply in our work is the mean function.
\item 
For each agent $m$, update the node information:
\begin{equation}
z^m =\text{UPDATE}(x^m, x^{\mathcal{N}_m} , \mathcal{I}),  \label{eq:update}
\end{equation}
where $\text{UPDATE}(x^m, x^{\mathcal{N}_m} , \mathcal{I}): \mathbb{R}^{D_x} \times \mathbb{R}^{D_{x}} \times \mathbb{R}^{D_\mathcal{I}} \rightarrow \mathbb{R}^{D_{z}} $ is typically implemented by a neural network.
\end{enumerate}

A simple form of an interacting dynamical system takes the features of each agent at its current position and connects agents that are within a pre-defined radius with edges.
The GNN operation updates the position and velocity information of each agent by taking information of the adjacent traffic participants into account.


\section{Graph Deep State-Space Models}
\label{sec:graph_dlgm}

We aim to model stochastic dynamical interactions between agents following complex behavioural patterns, such as road traffic interactions.
We extend deep state-space models to interacting systems by proposing \textit{Graph Deep State-Space Models} (GDSSM), which use graph neural networks in the transition model to capture interactions between agents.
We define our probabilistic model in this section and then introduce a novel scheme for efficient and deterministic training and predictions in the subsequent section.

We are interested in modeling the dynamics of $M$ interacting agents with deep state-space models by using a coupled latent space.
In other words, instead of using a $D_x$-dimensional latent space for each agent, we assume that the agents share a latent space of size $M D_x$.
Since (i) the number of agents can vary between scenes, (ii) the transition model should be agnostic to the order of the agents and (iii) it is challenging to parameterize high-dimensional latent spaces,  we use GNNs in the transition model.

More formally, we denote the state of agent $m$ at time step $t$ as $x^m_t \in\mathbb{R}^{D_{x}}$ and the set of all state variables as $x_t = \{x^m_t\}_{m=1}^M$. 
The dynamics of $x_t$ follow Eq.~\eqref{eq:transition}, where the mean update  $f(x_t, \mathcal{I}): \mathbb{R}^{M D_x} \times \mathbb{R}^{D_{\mathcal{I}}} \rightarrow \mathbb{R}^{M  D_x}$ and the variance update  $L(x_t,  \mathcal{I}): \mathbb{R}^{M D_x} \times \mathbb{R}^{D_{\mathcal{I}}} \rightarrow \mathbb{R}^{M  D_x}$ are implemented with the help of graph neural networks

\begin{align}
\label{eq:multi_agent_transition}
  f(x_t, \mathcal{I}) =  
  \begin{bmatrix}
   \Tilde{f}(x_t^1, x_t^{\mathcal{N}_1}, \mathcal{I}) \\
   \vdots\\
   \Tilde{f}(x_t^M, x_t^{\mathcal{N}_M}, \mathcal{I}) 
   \end{bmatrix},
    &&L(x_t,  \mathcal{I}) = 
  \begin{bmatrix}
   \Tilde{L}(x_t^1, x_t^{\mathcal{N}_1}, \mathcal{I}) \\
   \vdots\\
   \Tilde{L}(x_t^M, x_t^{\mathcal{N}_M}, \mathcal{I}) 
   \end{bmatrix}.
\end{align}
The agent-specific mean update is denoted by 
  $\Tilde{f}(x_t^m, x_t^{\mathcal{N}_m}, \mathcal{I}):\mathbb{R}^{D_x}\times \mathbb{R}^{D_x}\times \mathbb{R}^{D_{\mathcal{I}}} \rightarrow \mathbb{R}^{D_x}$  and  the variance  by $\Tilde{L}(x_t^m, x_t^{\mathcal{N}_m}, \mathcal{I}):\mathbb{R}^{D_x}\times \mathbb{R}^{D_x} \times \mathbb{R}^{D_{\mathcal{I}}} \rightarrow \mathbb{R}^{D_x}_+$. 
Both implement the update function in general graph neural networks [Eq.~\eqref{eq:update}], whereas
$, x_t^{\mathcal{N}_m}$ contains aggregated information of the states from all neighboring agents  [Eq.\eqref{eq:aggregation}].
A deterministic variant of our model, e.g. setting $L(x_t, \mathcal{I})=0$, has been successfully used for learning surrogate models for complex physical systems \citep{sanchez2020learning}. 
We further assume that it is sufficient to couple the latent dynamics across the agents and keep the emission model [Eq.~\eqref{eq:emission}] independent across agents.
Note that our transition model consists of a single aggregation and update step which is sufficient if the data is densely sampled such that the information flow between agents is fast compared to the evolution of the state dynamics.
However, it is also straight-forward to extend the model to multiple message-passing steps per time point by stacking multiple GNN layers in the mean and variance update function.

Furthermore, we note that although the transition noise factorizes across agents, correlations between agents emerge since the mean and the variance depend not only on the state of the $m$-th agent, but also on the states of all neighboring agents.
After $a$ aggregation steps, our GNN model accounts for correlations between agent $m$ and agent $m'$ provided that they are connected by a path that is at most $a$ steps long.
In contrast, methods, that only take the state of the $m$-th agent into account, do not lead to any correlations, while methods, that take the state of all other agents into account, lead to a fully correlated covariance matrix after one time step.

In order to complete the probabilistic description of our model, we further specify the distribution of the initial latent state $x_0 \in \mathbb{R}^{MD_x}$ with  a \textit{Gaussian Mixture Model} (GMM) 
\begin{align}
\label{eq:distx0}
    v &\sim \text{Cat}([\pi_1(\mathcal{I}), \ldots, \pi_V(\mathcal{I})]),\\
    x_0 &\sim  \mathcal{N}(x_0 | \mu_{0,v}(\mathcal{I}), \text{diag}(\Sigma_{0,v}(\mathcal{I}))),\nonumber
\end{align}
where $\text{Cat}([\pi_1(\mathcal{I}), \ldots, \pi_V(\mathcal{I})])$ is a categorical distribution with $V$  mixture components. Each component is specified by its weight $\pi_{v}(\mathcal{I}): \mathbb{R}^{D_{\mathcal{I}}} \rightarrow \mathbb{R}_+$, mean $\mu_{0,v}(\mathcal{I}): \mathbb{R}^{D_{\mathcal{I}}} \rightarrow \mathbb{R}^{M D_x}$, and diagonal covariance $\Sigma_{0,v}(\mathcal{I}): \mathbb{R}^{D_{\mathcal{I}}} \rightarrow \mathbb{R}_+^{ M D_x}$. The weights $\pi_{1:V}$ form a standard V-simplex.
We use a GNN, which we refer to as the embedding function $\text{h}(\mathcal{I}):\mathbb{R}^{D_\mathcal{I}} \rightarrow \mathbb{R}^{V+2VM D_x}$, in order to model the initial state distribution
\begin{equation}
    \text{h}(\mathcal{I})= 
    \begin{bmatrix}
   \pi_{1:V}(\mathcal{I}) \\
   \mu_{0,1:V}(\mathcal{I}) \\
   \Sigma_{0,1:V}(\mathcal{I}) 
   \end{bmatrix}.
   \label{eq:embedding_function}
\end{equation}
We assume that the context variable $\mathcal{I}$ contains relational information as a set of edges as well as historical information for each agent in the form of an observed trajectory.   
In a sense, the embedding function acts hereby as a filter, which learns a distribution over the initial latent state from past observations. 

In autonomous driving, the initial latent state can be connected to the drivers' intention \citep{multiple_futures}.
The context information $\mathcal{I}$ is often not sufficient to rule out different hypotheses about the future, e.g. does the car behind us want to overtake in the next five seconds or not. 
Using a mixture model allows us to incorporate different hypotheses into the model in a principled manner, which will ultimately lead to highly multimodal predictive distributions.

State-space models and graph neural networks have been previously combined for multi-agent trajectory forecasting by \cite{yang2020relational}.
In contrast to our work, the authors (i) use a  different model definition by applying recurrent neural networks and a non-Gaussian density in the transition model and (ii) perform Monte Carlo sampling during inference which can lead to slow convergence.
In the next chapter, we show that our model definition allows for more efficient training by performing deterministic moment matching rules.
We compare against Monte Carlo alternatives in our experiments.

\section{Deterministic Approximations for  GDSSMs}
\label{sec:det_gdlgm}
In this chapter, we present our novel inference scheme for GDSSMs that enables efficient and deterministic training and predictions. 
In Sec. \ref{subsec:inference}, we first give a short overview over existing inference techniques and compare it to our scheme which aims at directly maximizing the predictive log-likelihood of future trajectories.
The predictive log-likelihood cannot be computed in closed-form. We propose an efficient and deterministic approximation to it in Sec. \ref{subsec:det_gdlgm_marginal} that relies on bidimensional moment matching.
Our moment matching rules necessitate the computation of output moments and expected Jacobians of graph neural network layers. We present ouput moments for commonly used layers in  Sec. \ref{subsec:det_gdlgm_layers}.
As our algorithm approximates the output distribution at each time step with a Gaussian mixture distribution over all agents, the resulting covariance matrix for each mixture component can become computationally intractable for a large number of traffic participants. 
We address this pain point in  Sec. \ref{subsec:det_gdlgm_cov} by proposing sparse approximations to the covariance.

\subsection{Parameter Inference}
\label{subsec:inference}
Classical inference methods for state-space models aim at directly maximizing the log-likelihood of the data

\begin{equation}
  \log  p( y_1, \ldots, y_T | \mathcal{I}) = \log \int p(x_0|\mathcal{I}) 
                            \prod_{t={1}}^{T} p(x_t|x_{t-1}, \mathcal{I}) p(y_{t}|x_{t}) dx_0 \ldots dx_T, \label{eq:log_likelihood}
\end{equation}
where $p(x_t|x_{t-1}, \mathcal{I})$ is defined in Eq.~\eqref{eq:transition} and
$p(y_{t}|x_{t})$ in Eq.~\eqref{eq:emission}.
This quantity can only be computed in closed-form if the emission and transition model are linear Gaussians. 
In our case, the transition model is parameterized by a graph neural network and the emission model by a standard neural network.
Both functions are highly non-linear and render an analytical solution to Eq.~\eqref{eq:log_likelihood} infeasible.

Therefore, most existing methods apply either a particle filter \citep{schon2015sequential}, variational inference \citep{krishnan_inference, bayer2021mind} or a combination of both \citep{naesseth2018variational, fivo, le2017auto} in order to approximate the log-likelihood.
All of these approaches have in common that they require learning a proposal distribution $q(x_0, \ldots, x_T \vert y_1, \ldots, y_T)$.
In particle filtering, the proposal distribution is recursively defined by the importance function $q(x_t \vert x_0, \ldots, x_{t-1}, y_1, \ldots, y_t)$ and is optimal in terms of variance by setting 
$q(x_t \vert x_0, \ldots, x_{t-1}, y_1, \ldots, y_t) = p(x_t \vert x_{t-1}, y_t)$ (e.g. \cite{doucet2000sequential}.
Variational inference aims to find the best approximation to the true posterior within the chosen variational family by minimizing the \textit{Kullback-Leibler} (KL) divergence between the true and approximate posterior
\citep{blei_vi}, while variational sequential Monte Carlo seeks to minimize the KL divergence on an extended sampling space~\citep{le2017auto}.

However, the proposal distribution is only used as an auxiliary tool during inference. For many prediction tasks \citep{djuric_motion, ajay_pedestrian, multipath}, the quantity of interest is the \textit{predictive log-likelihood} (PLL)
\begin{eqnarray}
\text{PLL}(y_1, \ldots, y_T \vert \mathcal{I})
&=&
 \sum_{t=1}^T \log p(y_{t} \vert \mathcal{I})
 \label{eq:PLL}
 \\
 &=&
 \sum_{t=1}^T
 \log \int p(x_0 \vert \mathcal{I}) p(x_{t} \vert x_0, \mathcal{I} ) p(y_{t} \vert x_{t}) dx_0 d x_{t},
\nonumber
\end{eqnarray}
where the transition kernel $p(x_{t} \vert x_0, \mathcal{I})$ is defined in Eq.~\eqref{eq:transition_kernel_recursion}.

In contrast to the standard training objective, the predictive log-likelihood propagates the latent state forward in time without receiving any feedback from the observations; mimicking the behavior during test time.
Since we want to use the same objective during training and test time, we opt for directly maximizing the predictive log-likelihood during training.
The observation that using one-step ahead predictions in training is not sufficient in order to obtain reliable multi-step ahead predictions during testing has also been made in~\cite{bengio2015scheduled} where the authors propose a scheduled sampling strategy in order to gradually switch from single to multi-step ahead predictions during training. 
Multi-step ahead training has also been successfully applied for spatio-temporal forecasting\citep{rnn_particle_flow} and model-based planning \citep{planet}.

The PLL aims at optimizing the average marginal log-likelihood  $\log p(y_{t} \vert \mathcal{I})$, while the standard objective aims at optimizing the joint likelihood $ \log  p( y_1, \ldots, y_T | \mathcal{I})$.
We deem the choice between the two objectives application dependent:
The PLL is most useful for tasks that can be solved by assessing the marginal distributions only.
An important and large application class that falls into this category is in the context of autonomous driving in which the marginal distributions of neighboring traffic participants at a given time horizon are often sufficient in order to control the car (e.g. \cite{pedestrian_prediction}).
 As a consequence, many papers in the autonomous driving literature report as evaluation metrics the performance of their method at fixed time intervals which matches our PLL objective (e.g. \cite{djuric_motion, ajay_pedestrian, multipath}).
 In contrast, optimizing the joint log-likelihood is preferred for tasks that require sampling realistic looking trajectories, as it is done for instance in sentence generation \citep{vaswani2017attention}.

\subsection{Approximating the Predictive Log-Likelihood using Bidimensional Moment Matching}
\label{subsec:det_gdlgm_marginal}
We are interested in the predictive log-likelihood $\text{PLL}(y_1, \ldots, y_{T}|\mathcal{I})$, which describes the predictive log-likelihood of all traffic participants up to time step $T$. 
In order to calculate it, we need to solve the  nested set of integrals given in Eq.~\eqref{eq:PLL}. 
The BMM algorithm allows us to approximate  
the transition kernel $p(x_{t}| \mathcal{I})=\int p(x_{t}|x_0,  \mathcal{I})p(x_0| \mathcal{I}) d x_0$ in case that the initial state $x_0$ has a Gaussian distribution. 
However, in our model formulation, the initial latent state $x_0$ follows a GMM to allow for multimodality.
In order to account for that, we approximate the marginal latent distribution $p({x}_{t} | \mathcal{I})$ as
\begin{align}
    p({x}_{t} | \mathcal{I}) \approx \sum_{v=1}^V &\pi_v(\mathcal{I}) p({x}_{t,v} | \mathcal{I}),
\end{align}
where each mixture component $p({x}_{t,v} | \mathcal{I}) $ is approximated with the BMM algorithm as
$p({x}_{t,v} | \mathcal{I}) \approx \mathcal{N}({\mu}_{t,v}(\mathcal{I}), {\Sigma}_{t,v}( \mathcal{I}))$. 
Assuming a GMM at the initial state allows us to efficiently obtain  multimodal predictions while being computationally efficient:
We obtain multimodal distributions by explicitly marginalizing over the mixture components and we compute each component efficiently by applying the BMM algorithm. 
Finally, we approximate the  marginal distribution $p(y_{t}|\mathcal{I})$ as a GMM by another round of moment matching 
\begin{align}
\label{eq:marginal_dist}
p(y_{t}|\mathcal{I})&=\int p(y_{t}|g(x_{t}), \text{diag}( \Gamma(x_t))) p(x_{t}|\mathcal{I}) d x_{t}\\
&\approx \sum_{v=1}^V \pi_v(\mathcal{I}) \mathcal{N}(a_{t,v}(\mathcal{I}), B_{t,v}(\mathcal{I})) \nonumber,
\end{align}
where $a_{t,v}(\mathcal{I})$ and $B_{t,v}(\mathcal{I})$ are the mean and covariance of the $v$-th mixture component at the $t$-th time step. These two moments are available as
\begin{align}
a_{t,v}(\mathcal{I})= \mathbb{E}[g(x_{t,v})], 
&&B_{t,v}(\mathcal{I})= \mathrm{Cov}[g(x_{t,v})] + \text{diag}\left(\mathbb{E}[  \Gamma(x_t)]\right), 
\label{eq:output_moments_marginal}
\end{align}
which is a direct outcome of the law of the unconscious statistician. 
We present the pseudocode for computing $p(y_{t} \vert \mathcal{I})$ using our method in Algorithm \ref{alg:main}.
Algorithm \ref{alg:training} further shows how we can optimize our model parameters employing the PLL [Eq.~\eqref{eq:PLL}]  as a training objective.

In our method, we approximate $p(x_{t,v} \vert \mathcal{I})$ and $p(y_t \vert \mathcal{I})$ by applying moment matching to each Gaussian mixture component independently.
We note that this approximation becomes exact
for locally linear transition and emission functions as stated in Thm. \ref{thm:theorem_1}.

\begin{theorem}
The marginal distribution $p(y_t | \mathcal{I})$ is analytically computed as 
\begin{align}
p(y_{t}|\mathcal{I})&= \sum_{v=1}^V \pi_v(\mathcal{I}) \mathcal{N}(a_{t,v}(\mathcal{I}), B_{t,v}(\mathcal{I})) \nonumber,
\end{align}
for a GDSSM with the below generative model
\begin{align}
v &\sim \text{Cat}([\pi_1(\mathcal{I}), \ldots, \pi_V(\mathcal{I})]), \nonumber\\
x_0 &\sim  \mathcal{N}(\mu_{0,v}(\mathcal{I}), \text{diag}(\Sigma_{0,v}(\mathcal{I}))),\nonumber\\
x_t & \sim \mathcal{N} \left(x_t | x_{t-1} + f(t,v, \mathcal{I})x_{t-1},  \text{diag} \left(L(t,v, \mathcal{I}) \right)   \right),  &t= 1, \ldots, T \nonumber\\
y_t &\sim \mathcal{N} \left(y_t|g(t,v, \mathcal{I}) x_{t}, \text{diag} \left( \Gamma(t,v, \mathcal{I}) \right)   \right),&t=1, \ldots, T \nonumber
\end{align}
where $f(t,v, \mathcal{I}), L(t,v, \mathcal{I}), g(t,v, \mathcal{I}), \Gamma(t,v, \mathcal{I})$ are time $t$, component $v$, and context $\mathcal{I}$ depending matrices with appropriate dimensionality. 
\label{thm:theorem_1}
\end{theorem}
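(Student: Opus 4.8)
The plan is to condition on the mixture component $v$ and show that, in the linear-Gaussian regime stipulated by the theorem, the entire latent trajectory and the emitted observation stay \emph{exactly} Gaussian, so the bidimensional moment-matching recursions incur no approximation error. Since the law of total probability gives $p(y_t \mid \mathcal{I}) = \sum_{v=1}^V \pi_v(\mathcal{I})\, p(y_t \mid v, \mathcal{I})$, it suffices to establish that each conditional $p(y_t \mid v, \mathcal{I})$ is exactly the Gaussian $\mathcal{N}(a_{t,v}(\mathcal{I}), B_{t,v}(\mathcal{I}))$ with the moments defined in \eqref{eq:output_moments_marginal}; summing these weighted by $\pi_v(\mathcal{I})$ then recovers the claimed mixture.

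First I would fix $v$ and argue by induction on $t$ that $x_t \mid v$ is exactly Gaussian. For the base case, $x_0 \mid v \sim \mathcal{N}(\mu_{0,v}(\mathcal{I}), \mathrm{diag}(\Sigma_{0,v}(\mathcal{I})))$ by \eqref{eq:distx0}. For the inductive step, the theorem's transition is the affine map $x_t = (I + f(t,v,\mathcal{I}))\,x_{t-1} + \epsilon_t$ with $\epsilon_t \sim \mathcal{N}(0, \mathrm{diag}(L(t,v,\mathcal{I})))$ independent of $x_{t-1}$; here the generic ``function'' $f(x_{t-1},\mathcal{I})$ of \eqref{eq:moment_matching_update_rules} is identified with the linear map $f(t,v,\mathcal{I})\,x_{t-1}$. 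Since an affine image of a Gaussian plus independent Gaussian noise is again Gaussian, the induction closes and yields closed-form exact recursions for $\mu_{t,v}$ and $\Sigma_{t,v}$.

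The crucial step is to verify that these exact recursions coincide term-by-term with the horizontal update \eqref{eq:moment_matching_update_rules}. The mean update and the additive-noise term $\mathrm{diag}(\mathbb{E}[L])=\mathrm{diag}(L(t,v,\mathcal{I}))$ are immediate. The delicate terms are $\mathrm{Cov}[f]$ and the cross-covariance $\mathrm{Cov}[x_{t-1}, f]$, which BMM generically approximates through Stein's lemma and the layerwise expected-Jacobian product. Here I would invoke that Stein's lemma is exact for Gaussian arguments, and that for the affine map the Jacobian equals the constant matrix $f(t,v,\mathcal{I})$, so its expectation — and the trivial single-factor product that replaces the vertical moment matching of a multilayer net — are exact. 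This gives $\mathrm{Cov}[x_{t-1}, f] = \Sigma_{t-1,v}\, f(t,v,\mathcal{I})^\top$ and $\mathrm{Cov}[f] = f(t,v,\mathcal{I})\,\Sigma_{t-1,v}\, f(t,v,\mathcal{I})^\top$, and substituting into \eqref{eq:moment_matching_update_rules} reproduces $(I+f)\Sigma_{t-1,v}(I+f)^\top + \mathrm{diag}(L)$, the true covariance of the linear transition. This is the main obstacle, since it is precisely where the otherwise-approximate scheme must be shown to be tight; I expect the verification itself to be routine algebra once the exactness of Stein's lemma and of the Jacobian are in place.

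Finally I would apply the emission step: $y_t = g(t,v,\mathcal{I})\,x_t + \eta_t$ with $\eta_t \sim \mathcal{N}(0, \mathrm{diag}(\Gamma(t,v,\mathcal{I})))$ is again affine in the Gaussian $x_t \mid v$, hence $y_t \mid v$ is exactly Gaussian with mean $\mathbb{E}[g(x_{t,v})]$ and covariance $\mathrm{Cov}[g(x_{t,v})] + \mathrm{diag}(\mathbb{E}[\Gamma(x_t)])$, which is exactly \eqref{eq:output_moments_marginal}. Marginalizing over $v$ with weights $\pi_v(\mathcal{I})$ then yields the stated Gaussian mixture, completing the argument.
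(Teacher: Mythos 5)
Your proposal is correct and takes essentially the same approach as the paper's proof: condition on the mixture component $v$, exploit the fact that the conditional model is linear-Gaussian so the latent state and emission remain exactly Gaussian, verify that the BMM recursions reproduce the exact moments, and marginalize over $v$. The only difference is presentational — you verify the moment recursion inductively term-by-term (including the exactness of Stein's lemma for the affine map), whereas the paper writes out the unrolled closed-form product expression for $\mu_{t,v}$ and $\Sigma_{t,v}$ and asserts that inserting the locally linear system into the BMM update yields the same result.
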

\begin{proof} The proof is straightforward as the output moments of the transition and emission function are analytically tractable. We provide details in App. \ref{app:proof_theorem_1}.
\end{proof}

\begin{algorithm}[h]
    \scriptsize
	\caption{Bidimensional Moment Matching in Latent Space (\texttt{BMMLS}) }\label{alg:main}
	\begin{algorithmic}
	\State {\bf Inputs:} $T$  \Comment{Prediction horizon}
	\State  ~~~~~~~~~~~ $\mathcal{I}$  \Comment{Context variable}
	\State ~~~~~~~~~~~ $f(x_t, \mathcal{I}; \theta_f)$ \Comment{Mean update}
	\State ~~~~~~~~~~~  $L(x_t, \mathcal{I}; \theta_L)$  \Comment{Covariance update}
	\State ~~~~~~~~~~~  $g(x_t; \theta_g)$  \Comment{Mean emission}
	\State ~~~~~~~~~~~  $\Gamma(x_t; \theta_\Gamma)$  \Comment{Covariance emission}
	\State ~~~~~~~~~~~  $h(\mathcal{I}; \theta_h)$  \Comment{Embedding function}
	\State {\bf Outputs:} Approximate marginal distribution $p(y_T|\mathcal{I})$
	\\
	\State
	$\mu_{0,1:V}(\mathcal{I}), \Sigma_{0,1:V}(\mathcal{I}), \pi_v(\mathcal{I}) =
	h(\mathcal{I}; \theta_h)$ \Comment{GMM at initial step, Eq. \ref{eq:embedding_function}}
	\For{ mixture component $v \in \{1,\cdots,V\}$ } 
	\For{ time step $t \in \{0,\cdots,T-1\}$ } \Comment{Horizontal Moment Matching}
       \State $\mu_{t+1,v}    \leftarrow  \mu_{t}(\mathcal{I}) +  \mathbb{E} [f(x_{t},  \mathcal{I}; \theta_f)]$ \Comment{Eq. \ref{eq:moment_matching_update_rules}} 
       \State $\Sigma_{t+1,v} \leftarrow \Sigma_{t}( \mathcal{I})+ \mathrm{Cov}[f( x_{t},  \mathcal{I}; \theta_f)] +
               \mathrm{Cov}[ x_{t}, f( x_{t}, \mathcal{I}; \theta_f), ]  + \mathrm{Cov}[ x_{t}, f( x_{t},  \mathcal{I}; \theta_f)]^T  +  
               \text{diag} \left( \mathbb{E}[{L}(x_{t},  \mathcal{I}; \theta_L)] \right)$ 
               \Comment{Eq. \ref{eq:moment_matching_update_rules}} 
    \EndFor
	 \State $a_{T,v}(\mathcal{I})    \leftarrow \mathbb{E}[g(x_{T,v}; \theta_g)]$ \Comment{Eq. \ref{eq:output_moments_marginal}}
	 \State $B_{T,v}(\mathcal{I})    \leftarrow \mathrm{Cov}[g(x_{T,v}; \theta_g)] + \text{diag}\left(\mathbb{E}[  \Gamma(x_T; \theta_\Gamma)]\right)$ \Comment{Eq. \ref{eq:output_moments_marginal}}
	\EndFor
	\State \Return $\sum_{v=1}^V \pi_v(\mathcal{I}) \mathcal{N}(a_{T,v}(\mathcal{I}), B_{T,v}(\mathcal{I}))$  
\end{algorithmic}
\end{algorithm}
\begin{algorithm}[h]
    \scriptsize
	\caption{Deterministic Training of a GDSSM}\label{alg:training}
	\begin{algorithmic}
	\State {\bf Inputs:} $\mathcal{D}$ \Comment{Dataset}  
	\State ~~~~~~~~~~~ $f(x_t, \mathcal{I}; \theta_f)$ \Comment{Mean update}
	\State ~~~~~~~~~~~  $L(x_t, \mathcal{I}; \theta_L)$  \Comment{Covariance update}
	\State ~~~~~~~~~~~  $g(x_t; \theta_g)$  \Comment{Mean emission}
	\State ~~~~~~~~~~~  $\Gamma(x_t; \theta_\Gamma)$  \Comment{Covariance emission}
	\State ~~~~~~~~~~~  $h(\mathcal{I}; \theta_h)$  \Comment{Embedding function}
	
	\State {\bf Outputs:} Optimized weights $\theta  = \{ \theta_f, \theta_L, \theta_g, \theta_\Gamma, \theta_h\}$
	\\
	\While{not converged}
	\State $\mathcal{I}, y_{1:T} \sim D$ \Comment{Sample $\mathcal{I}$ and $y_{1:T}$ from dataset}
	\For{ $y_t \in y_{1:T}$}
	\State $p(y_t | \mathcal{I}) = \texttt{BMMLS}(t, \mathcal{I}, f, L, g, \Gamma, h)$ \Comment{Approximate marginal distribution $p(y_t | \mathcal{I})$ via algorithm \ref{alg:main}}
	\EndFor
	\State
	$\text{PLL}(y_1, \ldots, y_T|\mathcal{I})=\sum_{t=1}^T \log p(y_t|\mathcal{I})$ \Comment{Calculate predictive log-likelihood \ref{eq:PLL}}
	\State $\theta \leftarrow \theta + \frac{\partial PLL(y_{1:T}|\mathcal{I})}{\partial \theta}$
	\Comment{Update weights $\theta$} 
	\EndWhile
	\State \Return $\theta$
\end{algorithmic}
\end{algorithm}

For general non-linear transition and emission functions, the quality of this scheme depends on two factors: (a) how well the transition and emission function can be approximated in a locally linear fashion and (b) if the covariance matrix $\Sigma_{t,v}$ is small enough over the time horizon $t$.
We observe in our experiments that the approximation works well and further illustrate its behavior on a small toy dataset in Fig. \ref{fig:toy_exp}.
Finally, we note that Gaussian mixture models have also been successfully used for filtering problems \citep{alspach1972nonlinear} where, similar as in our work, moment matching is performed for each component individually in the predict step.

\begin{figure}[h]
    \centering
    \includegraphics[width=1\columnwidth ]{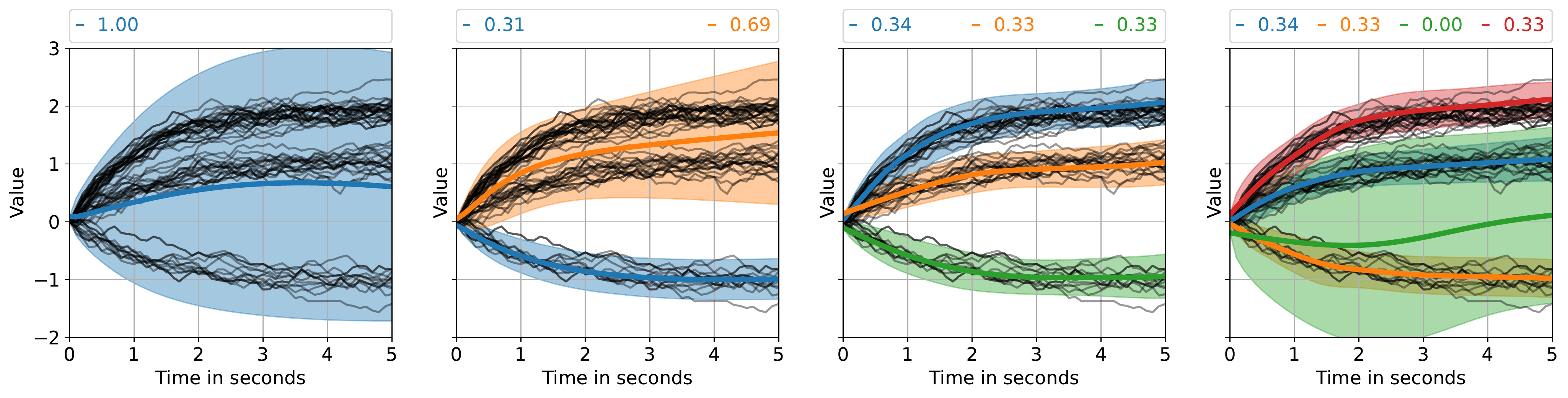}
    \caption{
    Predictions of our model after training on an one-dimensional, multi-modal toy problem. 
    Solid black lines represent the ground truth dynamics with three different modes of equal probability.
    For our model, we report for each predicted mode the mean, 95\% confidence interval and mixture weight.
    We set the dimension of the latent space to $D_x=3$ and vary (from left to right)
    the number of modes $V$ in the model from 1 to 4.    
    For $V<3$, the number of modes is set too small, and we observe mode covering behavior.
     For $V\geq3$, our model can recover the ground truth dynamics.
If the number of components is too high (V=4), the mixture weights of redundant components are set to zero.\\
}
    \label{fig:toy_exp}
\end{figure}

\subsection{Output Moments of Graph Neural Network Layers}
\label{subsec:det_gdlgm_layers}
In order to use the the BMM framework, we need to be able to calculate the first two output moments as well as the expected Jacobian of graph neural network layers. In the following, we derive the analytic expression of the output moments and expected Jacobian for the common graph neural net layers: (i) node-wise affine transformation, and (ii) mean aggregation.
Output moments for the ReLU activation are provided in \citet{wu2018deterministic}.
For completeness, we also provide the output moments for the ReLU activation in App. \ref{app:nonlinearities}.

Let $x^{l,m}_t \in \mathbb{R}^{D_{x,l}}$ be the node features at layer $l$ of node $m$ at time step $t$ and $x_t^l = \{x^{l,m}_t\}_{m=1}^M$ the set of all node features with $x_t^l \in \mathbb{R}^{M D_{x,l}}$. For the sake of brevity, we have omitted here
the index of the mixture component $v$.
We denote mean and covariance of a graph with $M$ nodes at layer $l$ and time step $t$ as 
\begin{align}
  \mathbb{E}[x_t^l]=
  \begin{bmatrix}
   \mathbb{E}[x_t^{l,1}] \\
   \vdots\\
   \mathbb{E}[x_t^{l,M}] 
   \end{bmatrix},
   &&\mathrm{Cov}[x_t^l]=  
\begin{bmatrix}
   \mathrm{Cov}[x_t^{l,1},x_t^{l,1}] & \ldots & \mathrm{Cov}[x_t^{l,1}, x_t^{l,M}]\\
   \vdots &  \ddots  & \vdots \\
   \mathrm{Cov}[x_t^{l,M}, x_t^{l,1}] &  \ldots & \mathrm{Cov}[x_t^{l,M},x_t^{l,M}] .
   \end{bmatrix} ,
\end{align}
where $ \mathbb{E}[x_t^{l,m}] \in \mathbb{R}^{D_{x,l}}$ and $ \mathrm{Cov}[x_t^{l,m}, x_t^{l, m'}] \in \mathbb{R}^{D_{x,l} \times D_{x,l}}$. 
A typical GNN architecture (see Sec.~\ref{sec:graph_neural_networks})
consists of alternating aggregation steps, in which information of all neighbors is collected, and update steps, in which the features of the node are updated.
For the aggregation step, we derive the output moments for the commonly used mean aggregation operation in Sec.~\ref{sec:aggregate}.
For the update step, we assume that the neural network is built as a sequence of affine transformations and nonlinearities. The output moments of nonlinear activations are applied independently across agents.
As a consequence, their rules do not change when used in the GNN setting and we can use the derivations from \cite{wu2018deterministic}. 
Hence it remains open to derive the output moments for affine transformations, as they are used in GNN context, which we tackle in Sec.~\ref{sec:node-update}.
The mean aggregation operation and the node-wise affine operation  used in the update step are special cases of a standard affine layer, which we review in Sec.~\ref{sec:std_affine_transformation}.

\subsubsection{Standard Affine Transformation}
\label{sec:std_affine_transformation}
Suppose we apply an affine transformation to node $m$ at layer $l$ with state $x^{l,m}_t$
\begin{equation}
    x^{l+1,m}_t = W^l x^{l,m}_t + b^l,
\end{equation}
with weight matrix $W^l$ and bias $b^l$.
The output moments are analytically tractable as
\begin{align}
    \mathbb{E}[x_t^{l+1,m}]={W}^l \mathbb{E}[x_t^{l,m}] + {b}^l,
    &&\mathrm{Cov}[x_t^{l+1,m}]= {W}^l \mathrm{Cov}[x_t^{l, m}] ({W}^l)^T. \nonumber
\end{align}
The expected Jacobian of the affine transformation reads as $J_t^l = W^l$.

\subsubsection{Node-Wise Affine Transformation}
\label{sec:node-update}
The node-wise affine transformation applies to each node $m$ at layer $l$ with state $x_t^{l,m}$ the same transformation simultaneously  with weight matrix $W^l$ and bias $b^l$. The node-wise affine transformation can be interpreted as a standard affine transformation acting on the set of all nodes $x_t^{l}$ as
\begin{equation}
    \begin{bmatrix}
     W^l x^{l,1}_t + b^{l}\\
     W^l x^{l,2}_t + b^{l}\\
     \vdots \\
     W^l x^{l,M}_t + b^{l}
    \end{bmatrix} =
    \begin{bmatrix}
     W^l    & 0      & \ldots & 0\\
     0      & W^l    & \ldots & 0\\
     \vdots & \vdots & \ddots & \vdots\\
     0      & 0      & \ldots & W^l
     \end{bmatrix} 
     \begin{bmatrix}
      x^{l,1}_t\\
      x^{l,2}_t\\
      \vdots\\
      x^{l,M}_t
     \end{bmatrix} +
      \begin{bmatrix}
      b^{l}\\
      b^{l}\\
      \vdots\\
      b^{l}
     \end{bmatrix} = \underbrace{(I_M \otimes W^l)}_{\hat{W}^l} x_t^l + \underbrace{(\mathbf{1}_M \otimes b^l)}_{\hat{b}^l},
\end{equation}
where $I_M$ is the identity matrix with shape $M \times M$, $\mathbf{1}_M$ is a vector of ones with shape $M \times 1$, and $\otimes$ is the Kronecker product. 
The output moments of node-wise affine transformation are 
\begin{align}
    \mathbb{E}[x_t^{l+1}]=\hat{W}^l \mathbb{E}[x_t^{l}] + \hat{b}^l,
    &&\mathrm{Cov}[x_t^{l+1}]= \hat{W}^l \mathrm{Cov}[x_t^{l}] (\hat{W}^l)^T. \nonumber
\end{align}
Similarly, as for the standard affine transformation, the expected Jacobian of node-wise affine transformation is analytically  available as $J_t^l=\hat{W}^l$.

\subsubsection{Mean Aggregation}
\label{sec:aggregate}

A commonly used aggregation operation is the mean aggregator, which calculates the message $x_t^{l,\mathcal{N}_m}$ to node $m$ at time step $t$ at layer $l$ as
\begin{equation}
    x_t^{l,\mathcal{N}_m} = \frac{1}{|\mathcal{N}_m|} \sum_{m' \in \mathcal{N}_m} x_t^{l,m'} \label{eq:mean_agg}. 
\end{equation}
Let $x_t^{l, \mathcal{N}}$ be the set of all messages, i.e.  $x_t^{l, \mathcal{N}}=\{x_t^{l, \mathcal{N}_m}\}_{m=1}^M$.
The mean aggregation can be equivalently written as a linear transformation
\begin{equation}
    x_t^{l, \mathcal{N}} =\underbrace{( A \otimes I_{D_{x,l}})}_{\hat{A}} x_t^l .
\end{equation}
Above, $A\in \mathbb{R}^{M \times M}$ denotes the row normalized adjacency matrix, which summarizes the edge information $\mathcal{E}$ in matrix format and $I_{D_{x,l}}$ denotes the identity matrix with dimensionality $D_{x,l} \times D_{x,l}$. The Kronecker product expands the adjacency matrix accordingly to the $D_{x,l}$-dimensional node features. Hence, the mean aggregation corresponds to a linear transformation with a weight matrix consisting of $M \times M$ blocks, where each block is a diagonal matrix of shape $D_x \times D_x$.
Its moments are analytically available as
\begin{align}
    \mathbb{E}[{x}_t^{l,\mathcal{N}}]=\hat{A}
     \mathbb{E}[{x}_t^l],
   &&\mathrm{Cov}[{x}_t^{l,\mathcal{N}}]=
\hat{A}
    \mathrm{Cov}[{x}_t^l]
\hat{A}^T.
\end{align} 
The expected Jacobian is available as  $J_t^l=\hat{A}$.

\subsection{Sparse Covariance Approximation}
\label{subsec:det_gdlgm_cov}

For settings with a large number of agents $M$ or with a high-dimensional state $x_t^{l,m}$, the application of the BMM algorithm can become computationally expensive. 
In the following, we review the computational complexity of the BMM algorithm.
We assume that the GNN model consists of a mean aggregation step followed by multiple node-wise affine transformations and nonlinearities, which is the same architecture that we employ later on in our experiments. 
The mean aggregation is done for each of the $D_x$ latent states independently, and we denote the maximum hidden layer width with $H$.

Computing the nonlinearities is cheap as the operation acts elementwise and their effect on the runtime can be neglected during this analysis.
The other two operations (see Sec.~\ref{sec:node-update} and Sec.~\ref{sec:aggregate}) can be described by affine operations for which the weight matrices are heavily structured;
the mean aggregation step corresponds to a weight matrix consisting of $M \times M$ diagonal blocks,
the node-wise affine transformation to a block-diagonal weight matrix with $M$ blocks of shape $H \times H$.
\begin{figure}[t]
    \centering
    \includegraphics[width=.6\textwidth]{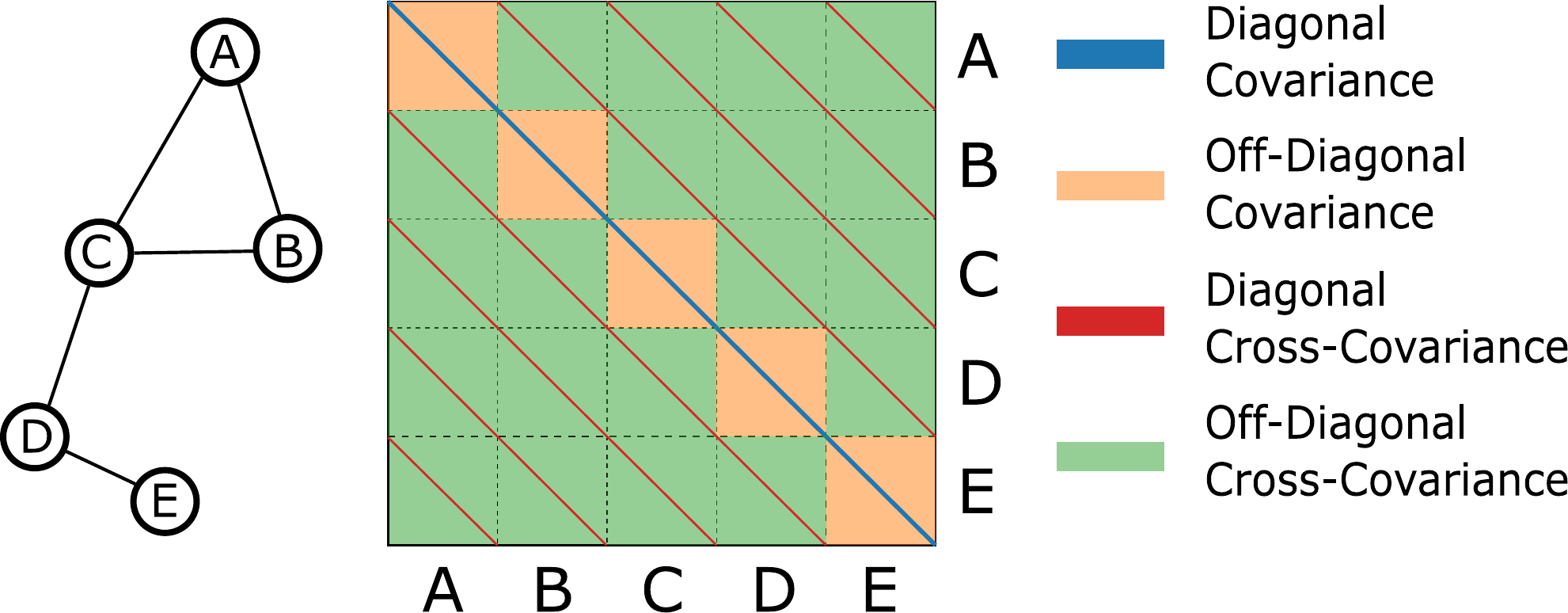} \caption[Structure of the covariance matrix of graph.]{Groupings of the covariance matrix for graph structured data. Consider the example graph  with $M=5$ agents, which is depicted in the left panel. Each agent has dimensionality $D_x$. In the middle panel, we show the covariance matrix between all agents, which consists of $5 \times 5$ blocks, where each block has dimensionality $D_x \times D_x$.}
    \label{fig:covariance_gnn}
\end{figure}
Propagation of the full covariance matrix through a neural network (forward cost) has the computational complexity of $\mathcal{O}( M^2H^3 + M^2 H^2 D_x+M^2HD_x^2+ M^3D_x^2)$, where the first term is due to the cost of the $H\times H$-dimensional node-wise affine transformations in the hidden layers, the second and third term are due to the cost of the $H \times D$-dimensional node-wise affine transformation after the aggregation operation, and the fourth term is due to the aggregation operation.

The computational cost of the expected Jacobian is $\mathcal{O}(MH^3 + MH^2D_x + M^2D_x^2)$ and we give its derivation in the following.
Let the expected Jacobian of the aggregation operation be $\mathbb{E}[J_t^1]$ and the product of the expected Jacobians of the subsequent neural net layers $\mathbb{E}[J_t^{net}]=\prod_{l=2}^L \mathbb{E}[J_t^l]$.
The expected Jacobian of the neural net layers $\mathbb{E}[J_t^{net}]$ is block-diagonal with $M$ blocks of shape $D_x \times D_x$ 
and its computation takes $\mathcal{O}(M H^3 + M H^2 D_x  )$ time.
Multiplying $\mathbb{E}[J_t^1]$ with 
$\mathbb{E}[J_t^{net}]$ results in a fully populated matrix where each entry can be computed by a single dot product, due to the structure of its factors, and its computation contributes with
 $\mathcal{O}(M^2D_x^2)$ to the runtime.

Consequently, the
total cost is dominated by the forward cost, $\mathcal{O}(M^2H^3 + M^2 H^2 D_x+ M^2HD_x^2+ M^3D_x^2)$,when using the full covariance matrix.\footnote{For reference, taking a Monte Carlo approach has the computational complexity $\mathcal{O}(SMH^2 + SMHD_x + SM^2D_x )$ where $S$ is the number of Monte Carlo samples.}
Since the computational cost quickly
becomes intractable due to the cubic dependence with respect to the number of agents in the forward pass, we next propose different sparse approximations to the covariance matrix.  
Independent of the chosen approximation, the cost of the expected Jacobian remains unchanged, as it does not depend on the covariance matrix.

\begin{itemize}
\item \textbf{Full}: Model the full covariance matrix.
    \newline Forward cost: $\mathcal{O}(M^2H^3 + M^2 H^2 D_x+ M^2HD_x^2 + M^3D_x^2)$. 
    \newline Total cost: $\mathcal{O}(M^2H^3 + M^2 H^2 D_x+ M^2HD_x^2 + M^3D_x^2)$. 

    \item \textbf{Main Diagonal}: Keep  the diagonal entries in the covariance blocks, which corresponds to the blue line in Fig. \ref{fig:covariance_gnn}.  
    \newline Forward cost: $\mathcal{O}(MH^2 + MHD_x + M^2D_x)$. 
    \newline Total cost:  $\mathcal{O}(MH^3 + MH^2D_x + M^2D_x^2)$.

    \item \textbf{Main Blocks}: Keep the
    block-diagonal blocks in the covariance matrix, 
    which corresponds to the orange blocks and blue lines in Fig. \ref{fig:covariance_gnn}. 
    \newline Forward cost: $\mathcal{O}(M H^3+ MH^2D_x +
    MHD_x^2 + M^2D_x^2)$. 
    \newline Total cost: $\mathcal{O}(M H^3+ MH^2D_x + MHD_x^2+ M^2D_x^2)$.

    \item \textbf{All Diagonals}: Structure the covariance matrix in blocks of shape $M \times M$ and keep the diagonal entries in each block, which corresponds to the blue and red lines in Fig. \ref{fig:covariance_gnn}. 
    \newline Forward cost: $\mathcal{O}(M^2 H^2 + M^2HD_x + M^3D_x)$. 
    \newline Total cost:  $\mathcal{O}(M^2 H^2  + M^2HD_x + M^3D_x+M H^3+ MH^2D_x + M^2D_x^2)$. 
\end{itemize}

Note that setting the off-diagonal blocks to zero, as done in Main Blocks and  Main Diagonal, corresponds to an independence assumption between the agents and leads to a runtime reduction from $O(M^3)$ to $O(M^2)$.

Finally, it is important to note that the covariance matrix only has the same structure as the graph after the first time step.
Agents that are not connected via an edge can still have a non-zero cross-covariance at time step $t$, provided that they are connected by a path that is at most $t$ steps long.
In our applications, this leads to non-sparse covariances after a few time steps, since the number of agents is small compared to the time horizon.
We present the covariance matrix at three different time steps for an exemplary scene in Fig. \ref{fig:coviarance_example}. For a short prediction horizon of one second, the covariance matrix has an approximately diagonal shape. As the prediction horizon increases, the covariance matrix becomes more complex and is no longer dominated by its diagonal entries.
We remark that we could further increase the information spread across agents by using an architecture with multiple aggregation steps within the GNN module if required.

\begin{figure}[t]
\centering
\begin{subfigure}{.25\textwidth}
  \centering
  \includegraphics[height=3cm]{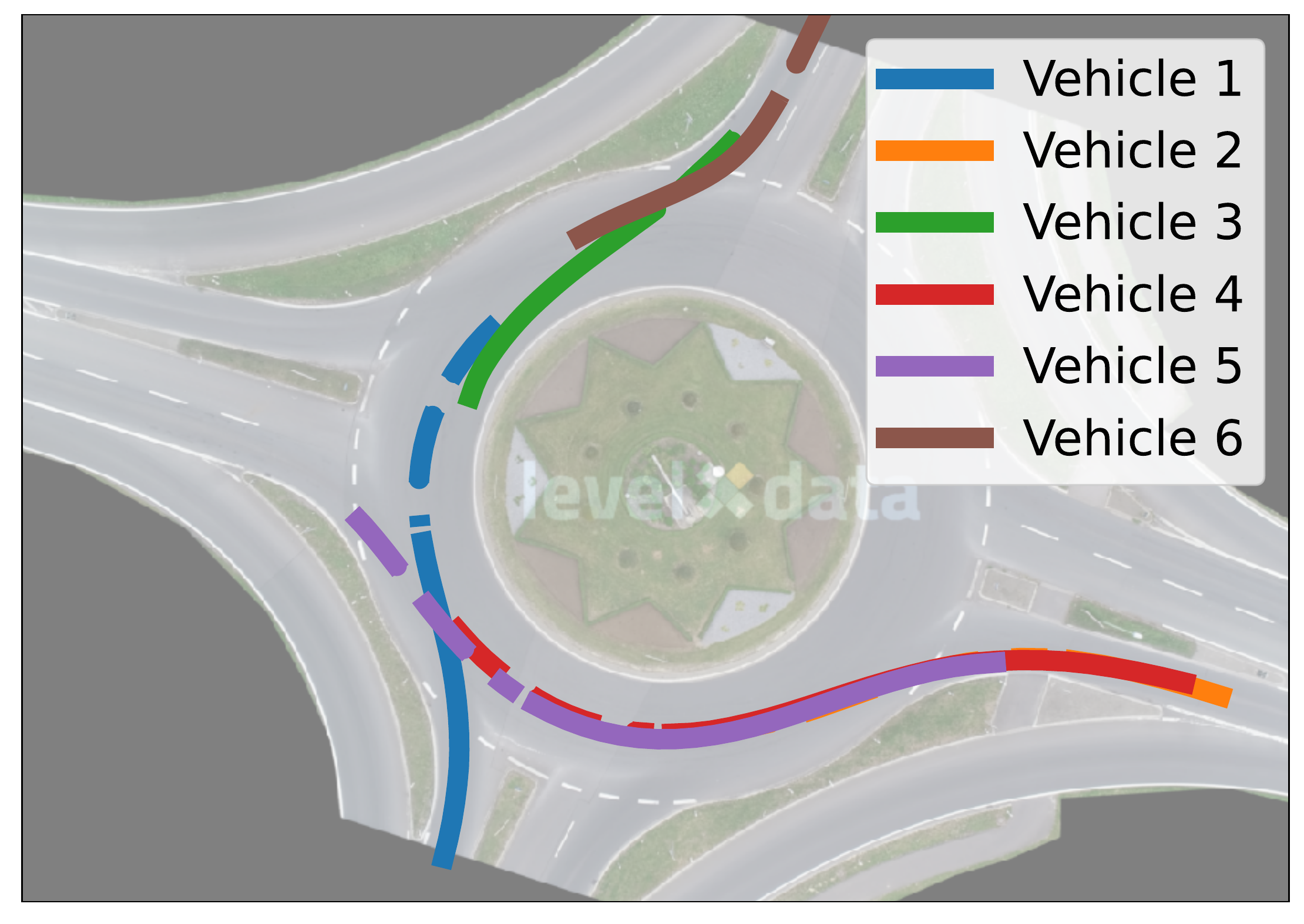}
  \caption{Example Scene.}
\end{subfigure}%
\begin{subfigure}{.25\textwidth}
  \centering
  \includegraphics[height=3cm]{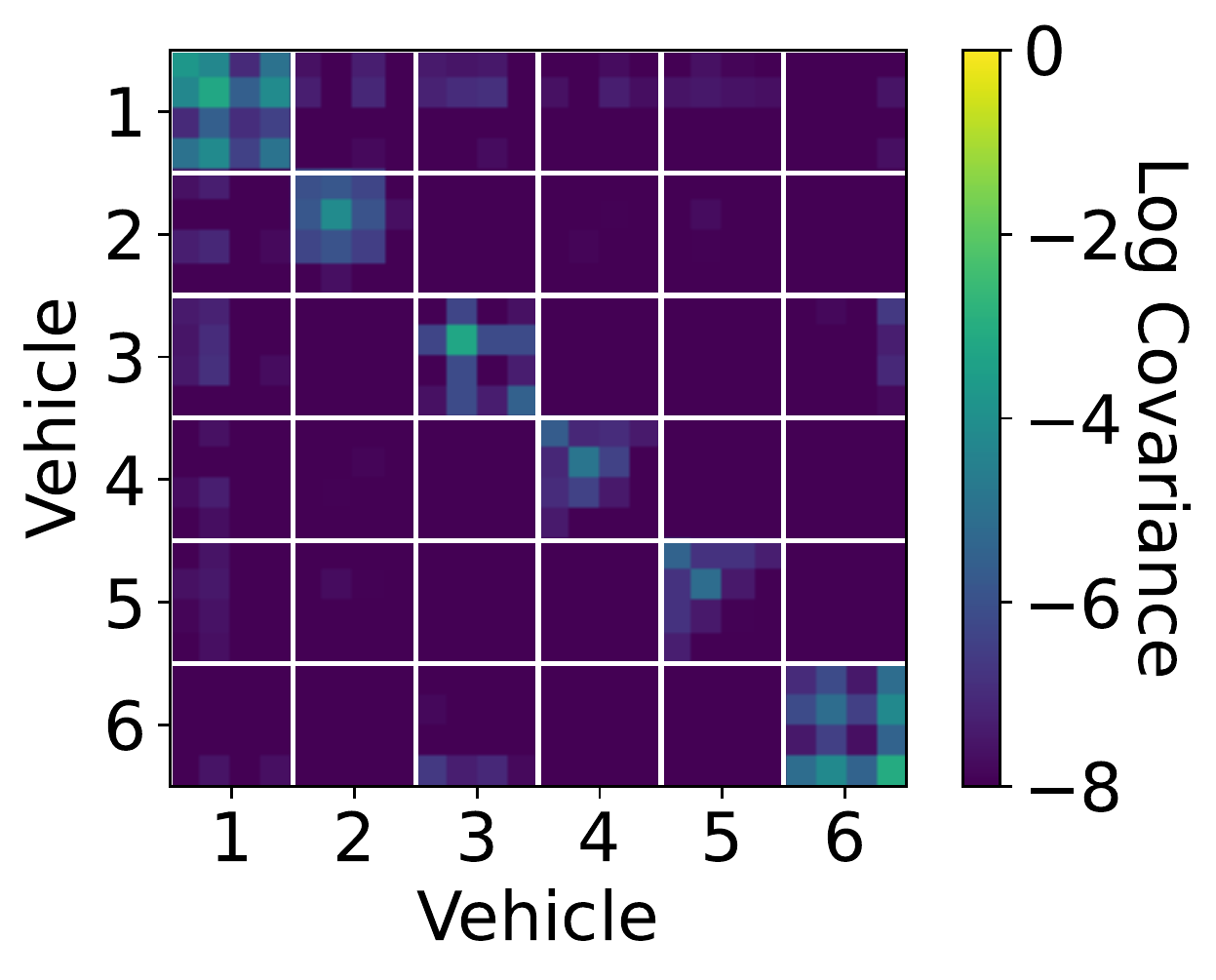}
  \caption{Covariance at 1 second.}
\end{subfigure}%
\begin{subfigure}{.25\textwidth}
  \centering
  \includegraphics[height=3cm]{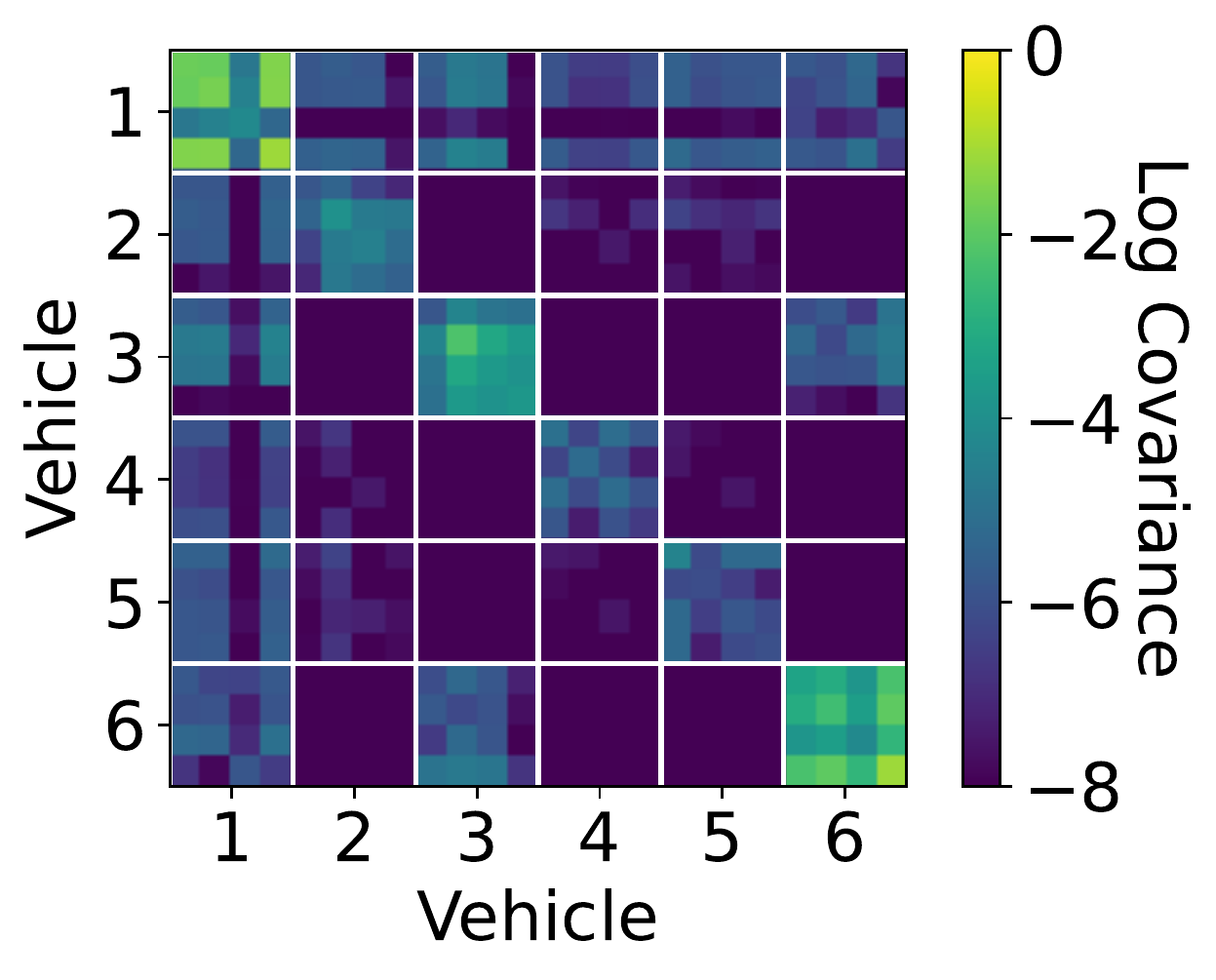}
  \caption{Covariance at 3 seconds.}
\end{subfigure}%
\begin{subfigure}{.25\textwidth}
  \centering
  \includegraphics[height=3cm]{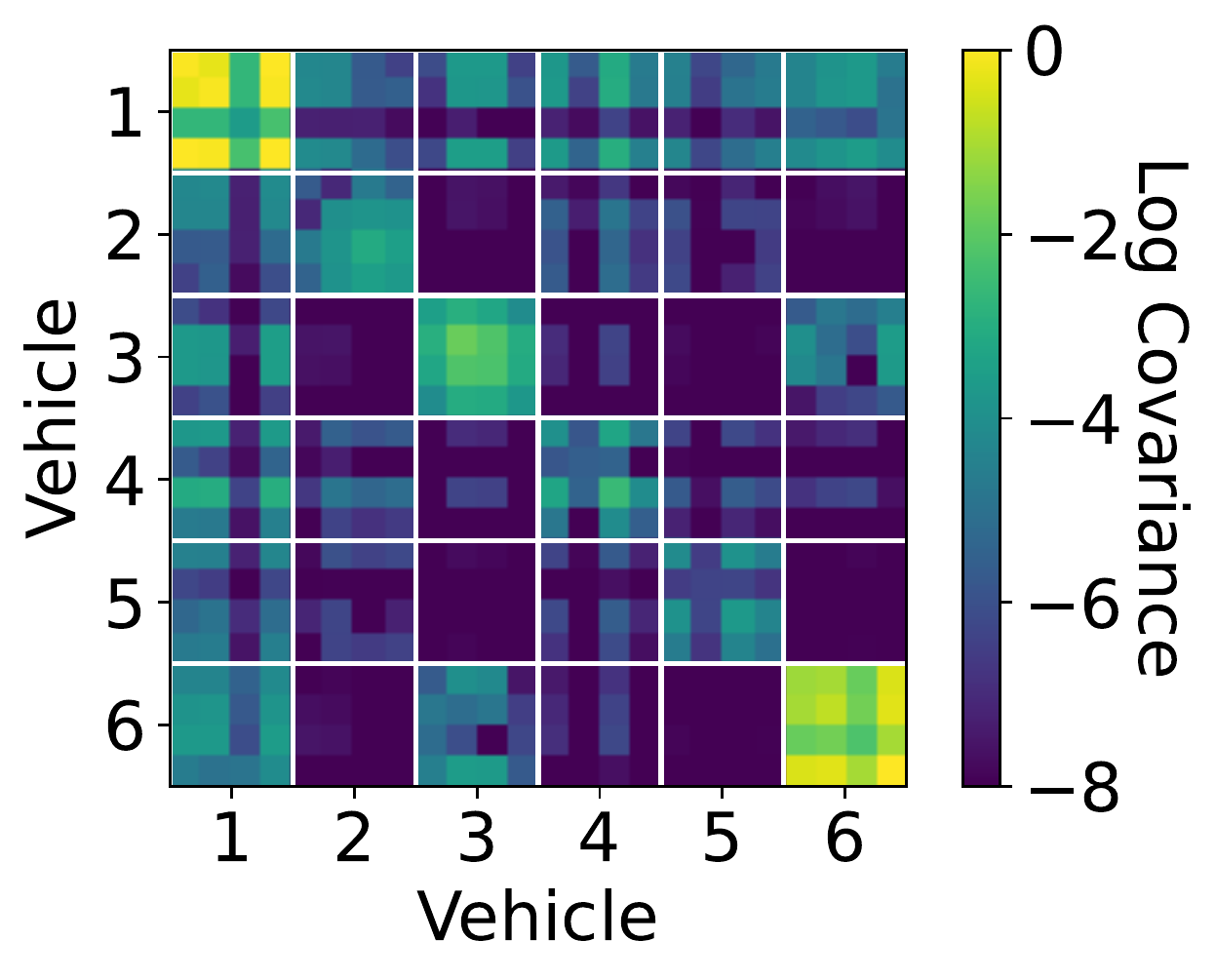}
  \caption{Covariance at 5 seconds.}
\end{subfigure}
\caption{We visualize an example scene from the rounD dataset \citep{rounDdataset} and the covariance matrix in the latent space at different time steps for the case of a unimodal initial distribution. In the left plot, dashed lines represent the observed history and solid lines represent the true future trajectories. 
}
\label{fig:coviarance_example}
\end{figure}
\section{Experiments}
\label{sec:experiments}
We provide experiments on  two challenging autonomous driving datasets.
The first experiment (Sec. \ref{subsec:experiments_round}) conducts an ablation study, while the second experiment (Sec. \ref{subsec:experiments_ngsim}) benchmarks our model against state-of-the-art methods. 
We provide a  runtime analysis and a benchmark of our proposed sparse covariance approximations in Sec. \ref{subsec:experiments_numerical}.
In Sec. \ref{subsec:experiments_ood_testing}, we analyse the generalization capabilities of our model by benchmarking it on novel unseen traffic environments.

We provide details about the training procedure in App. \ref{app:training_details}, 
and the architecture for the embedding, transition, and emission functions in App. \ref{app:architectures}.
Accompanying code is available under \url{https://github.com/boschresearch/Deterministic-Graph-Deep-State-Space-Models}.

As evaluation metrics, we use the Root-Mean-Square Error (RMSE) and the predictive negative log-likelihood (NLL) at fixed time intervals (1s, 2s, 3s, 4s, 5s). 
For more details about the evaluation, we refer to App. \ref{sec:metric}. 

\subsection{rounD}
\label{subsec:experiments_round}
The \href{https://www.round-dataset.com/}{rounD dataset} \citep{rounDdataset} consists of vehicle trajectories recorded at different roundabouts in Germany. 
As the roundabouts involve many interactions among vehicles, we expect the predictive distributions to be multimodal and highly complex.
We use the recordings from the roundabout in Neuweiler near Aachen for training and testing purposes. 
The dataset consists of 13,129 tracked objects recorded at 25\,Hz in 22 sessions, amounting to a total recording time of  6.6\,hours. 
We remove pedestrians, bicycles, as well as parked vehicles from the dataset, as their influence on the vehicle behaviour patterns in the roundabouts is negligible.
After dataset curation, we are left with 12,715 tracked objects. 
We downsample the recordings by a factor of five and construct a dataset consisting of eight-second-long segments with 50\,\% overlap, resulting in 5405 snippets. 
We use the first three seconds as the track history, which is part of the context variable $\mathcal{I}$, and the following five seconds as the prediction horizon. 
The first 18 recording sessions, corresponding to 4,314 snippets, are used for training and validation. 
The final four recording sessions, corresponding to 1,091 snippets, are used for testing. 

We build the connection graph by connecting vehicles with an Euclidean distance of less than 30\,meters.
The dataset has been recorded with drones and we keep the original global coordinate system.

\subsubsection{Baselines}
We compare our method with multiple baseline models.
For each baseline, we remove one key assumption of our model.
We cite papers that employ similar ideas as appropriate. We did not reimplement these works but performed an ablation study in which we replaced specific components of our model in the interest of a fair comparison. Furthermore, we compare different training strategies as an alternative to maximizing the PLL and analyze the multimodality of our model.

\textit{(i)} \textit{Model}: 
\begin{enumerate}
    \item[\textit{(i.i)}] \textit{GDSSM}: Our model as proposed in Sec. \ref{sec:det_gdlgm}.
    
    \item[\textit{(i.ii)}]  \textit{Non  Recurrent GNN} (e.g. \cite{implicit_latent, pedestrian_prediction}): This architecture receives the context variable $\mathcal{I}$, performs one round of message passing, and subsequently outputs one normal distribution for each of the next five seconds without using a recurrent architecture. 
    
    \item[\textit{(i.iii)}]  \textit{No Latent Noise} (e.g. \cite{sanchez2020learning}): We remove the noise from the latent dynamics [Eq.~\eqref{eq:transition}] while keeping the emission model unchanged. The uncertainty can no longer be propagated forward in time as the emission model acts independently for each time point.

    \item[\textit{(i.iv)}]  \textit{Linearity} (e.g. \cite{li2020learning}): We remove all non-linearities from the latent dynamics. Note that we keep the non-linear emission model in order to map the dynamics into a latent space in which the system can be linearly approximated. 

    \item[\textit{(i.v)}]  \textit{No Interactions} (e.g. \cite{krishnan_inference}): Our model with a diagonal adjacency matrix, i.e. we remove all edges from the graphs. This model neglects interactions between traffic participants.
\end{enumerate}

\textit{(ii)} \textit{Modes}:  
\begin{enumerate}
     \item[\textit{(ii.i)}] $V$ Modes: We vary the number of mixture components $V$ in the GMM prior [Eq.\eqref{eq:distx0}] in order to test the effect of multimodality.
     \item[\textit{(ii.ii)}] $\infty$ Modes: This alternative does not follow an assumed density approach, i.e. the  marginal latent distribution $p({x}_{t} | \mathcal{I})$ is not approximated as a GMM with a bounded number of modes. Instead  $p({x}_{t} | \mathcal{I})$ is approximated by  simulating a large number of trajectories, where each trajectory can follow a different mode \citep{brandt}. We set the number of particles to 100.   
\end{enumerate}
\textit{(iii)} \textit{Objectives}: 
\begin{enumerate}
    \item [\textit{(iii.i)}] \textit{PLL/Det.}: We train the model on the predictive log-likelihood [Eq.\eqref{eq:PLL}] and use our deterministic approximations for GDSSM as proposed in Sec. \ref{subsec:det_gdlgm_marginal} and Sec. \ref{subsec:det_gdlgm_layers}. 
    \item [\textit{(iii.ii)}] \textit{PLL/MC}:  We train the model on the predictive log-likelihood [Eq.\eqref{eq:PLL}]  and take an assumed density approach by approximating the marginal distribution  [Eq.\eqref{eq:marginal_dist}] as a GMM. The intractable integrals are solved via \textit{Monte Carlo} (MC) integration. One forward pass through our model amounts approximately to the computational cost of 12 Monte Carlo simulations. For a fair comparison, we use during training 16 particles, which is more costly than training with our proposed moment propagation algorithm, and test with 100 particles. We use the same amount of particles for all MC based methods.
    \item [\textit{(iii.iii)}] \textit{ELBO/MC} (e.g. \citep{krishnan_inference}): The model is trained by maximizing the\textit{ Evidence Lower Bound} (ELBO). We give a description of this loss function in App. \ref{app:classical_inference}. The approximate posterior is a filtering distribution that models the latent state as a normal distribution. 
    We propagate the latent state forward in time direction via Monte Carlo sampling. 
    \item [\textit{(iii.iv)}] \textit{MCO/MC} (e.g. \citep{fivo}): The model is trained by maximizing the \textit{Monte Carlo Objective} (MCO), which combines particle filters with variational inference.  We give a description of this loss function in App. \ref{app:classical_inference}. The proposal distribution is a filtering distribution and we propagate the particles in time direction via Monte Carlo sampling.
\end{enumerate}

\subsubsection{Results}
We provide benchmark results of all methods in Tab. \ref{tab:round_results}. 
First, we compare our deterministic training and testing scheme (GDSSM PLL/Det.) against its Monte Carlo alternative (GDSSM PLL/MC).
Though Monte Carlo based training is more costly than training with BMM, the Monte Carlo results are significantly outperformed by our method.
Our results indicate that our deterministic approach leads to more effective approximations compared  to Monte Carlo sampling despite the approximation error we obtain by our deterministic moment matching scheme.
One potential explanation for our finding is that the Monte Carlo approaches suffer under a high variance since the latent space for multi-agent space grows linearly with the number of agents.

When changing the training objective from the PLL to ELBO/MC or MCO/MC, we observe that the performance is comparable  to PLL/MC for the prediction horizon of 1\,second. 
For longer prediction horizons, the performance of the models that are trained on ELBO/MC or MCO/MC degrade quicker compared to the model trained on PLL. 
We believe that this behavior can be explained by the mismatch between training and testing objectives. 
When training on MCO/MC or ELBO/MC, the model obtains feedback from the observations via the proposal distribution, while during testing it has to produce multi-step ahead predictions without receiving any feedback from the observations.

Next, we study if our method can capture multimodality by increasing the number of components in the GMM prior. 
It is worth noting that GMMs can approximate arbitrarily complex distributions when the number of components is chosen high enough.
In our experiments, we observe that an increase of components significantly decreases the NLL,  making the GMM prior a vital ingredient of our method and suggesting that the true predictive distribution is highly multi-modal. 
If the number of components is chosen too small, our model adjusts its uncertainty predictions accordingly. For example, we observe in Fig. \ref{fig:1} the uncertainty of the orange agent to increase as the vehicle is close to the exit of the roundabout. The high predictive uncertainty can be explained by two potential future outcomes: the orange vehicle can leave the roundabout or stay inside. 
Consequently, our model learns to compensate if the number of components is picked too low, which in turn allows us to trade accuracy for computational runtime. 
When using tailored implementations, one can easily scale up to a larger number of components, since their computations can be parallelized without any hurdles.
We further note that the RMSE is less affected by the choice of the number of modes and stays within two standard errors.
Since the RMSE value measures the error between the true value and the expected value of the predictive distribution, 
this result suggests that the expected value can already be modeled well by predictive distributions with very few modes (actually, one mode seems to suffice for this).
We further provide 
the minRMSE values  in 
App.\ref{app:extended_results} that decrease with increasing number of modes, indicating that the different modes correspond to a diverse set of plausible trajectories.

Next, we compare our model to a simpler alternative in which the dynamics are assumed to be linear, which allows calculating the moments of the transition model exactly and in closed form \citep{applied_sde}. 
In contrast, our approach, GDSSM, approximates the non-linear dynamics in a local linear way by using deterministic moment matching results.
We find that our approach achieves lower RMSE and NLL, which can most likely be attributed to the higher modeling flexibility of our proposed model class.

Our ablation study further shows that discarding latent noise and removing interactions between traffic participants results in higher RMSE and NLL.
Compared to modeling the dynamics in a non-recurrent manner, our model achieves a similar RMSE and outperforms its competitor in terms of NLL.

\begin{table*}[t]
\centering
	\caption{
 \textcolor{black}{
 RounD results. 
We provide RMSE and NLL (mean $\pm$ standard error over 10 runs).   }}
	\label{tab:round_results}	
	\adjustbox{max width=0.99\textwidth}{
    
	\begin{tabular}{lc|c|c|c|c|c|c|c|c|c|c|c}
			\toprule
		&
			& Non  Recur.             & GDSSM  & GDSSM  & GDSSM             & GDSSM             & GDSSM             & GDSSM  & GDSSM  & GDSSM   & GDSSM   & GDSSM  \\		
			&&    GNN      &   &  & No Lat. Noise            &  Linear           & No Interaction            &   &   &  &   &  \\
		&
			&     1 Mode    & $\infty$ Modes &  $\infty$ Modes & 1 Mode               & 1 Mode              & 1 Mode               &   1 Mode    &  1 Mode   & 2 Modes     & 3 Modes     & 4 Modes    \\
				&&  PLL/Det.            & ELBO/MC & MCO/MC &  PLL/Det.   &   PLL/Det.      &   PLL/Det.  &        PLL/MC   &   PLL/Det.   &     PLL/Det.    &      PLL/Det.   &    PLL/Det.   \\
			\midrule
			\parbox[t]{2mm}{\multirow{5}{*}{\rotatebox[origin=c]{90}{RMSE}}} 
			 & 1\,s & 0.72 $\pm$ 0.02 &  \textcolor{black}{1.53 $\pm$ 0.04} &  \textcolor{black}{0.95 $\pm$ 0.04} & 1.22 $\pm$ 0.08 &  0.88 $\pm$ 0.02 & 0.91 $\pm$ 0.03 & 0.90 $\pm$ 0.02 & 0.79 $\pm$ 0.02 &  \textcolor{black}{0.75 $\pm$ 0.02} &  \textcolor{black}{0.74 $\pm$ 0.03} &  \textcolor{black}{0.73 $\pm$ 0.03}\\ 
			 & 2\,s & 1.90 $\pm$ 0.02 &  \textcolor{black}{3.42 $\pm$ 0.09} &  \textcolor{black}{2.43 $\pm$ 0.09} & 2.33 $\pm$ 0.08 &  2.12 $\pm$ 0.03 & 2.10 $\pm$ 0.04 & 2.09 $\pm$ 0.02 & 1.87 $\pm$ 0.03 &  \textcolor{black}{1.84 $\pm$ 0.03} &  \textcolor{black}{1.82 $\pm$ 0.03} &  \textcolor{black}{1.80 $\pm$ 0.04}\\
			 & 3\,s & 3.54 $\pm$ 0.03 &  \textcolor{black}{5.99 $\pm$ 0.14} &  \textcolor{black}{4.55 $\pm$ 0.16} &  3.88 $\pm$ 0.07 &  3.88 $\pm$ 0.05 & 3.69 $\pm$ 0.06 & 3.60 $\pm$ 0.04 & 3.36 $\pm$ 0.03 &  \textcolor{black}{3.35 $\pm$ 0.03} &  \textcolor{black}{3.35 $\pm$ 0.03} &  \textcolor{black}{3.33 $\pm$ 0.04}\\
			 & 4\,s & 5.26 $\pm$ 0.04 &  \textcolor{black}{9.24 $\pm$ 0.26} &  \textcolor{black}{7.62 $\pm$ 0.27} & 5.58 $\pm$ 0.08 &  6.13 $\pm$ 0.09 & 5.39 $\pm$ 0.13 & 5.37 $\pm$ 0.05 & 5.08 $\pm$ 0.04 &  \textcolor{black}{5.15 $\pm$ 0.09} &  \textcolor{black}{5.14 $\pm$ 0.08} &  \textcolor{black}{5.06 $\pm$ 0.04}\\
			 & 5\,s & 7.20 $\pm$ 0.05 &  \textcolor{black}{11.59 $\pm$ 0.41} &  \textcolor{black}{11.24 $\pm$ 0.41} & 7.65 $\pm$ 0.10 & 8.83 $\pm$ 0.13 & 7.56 $\pm$ 0.23 & 7.65 $\pm$ 0.06 & 7.24 $\pm$ 0.05 &  \textcolor{black}{7.34 $\pm$ 0.12} &  \textcolor{black}{7.35 $\pm$ 0.19} &  \textcolor{black}{7.29 $\pm$ 0.09}\\
			\midrule
		    \parbox[t]{2mm}{\multirow{5}{*}{\rotatebox[origin=c]{90}{NLL}}} 
		     & 1\,s & 1.90 $\pm$ 0.01 & 2.62 $\pm$ 0.04 & 2.79 $\pm$ 0.06 & 2.96 $\pm$ 0.08 & 1.95 $\pm$ 0.08 & 1.67 $\pm$ 0.07 & 2.82 $\pm$ 0.03 & 1.48 $\pm$ 0.05 & 1.34 $\pm$ 0.08 & 1.36 $\pm$ 0.04 & 1.21 $\pm$ 0.05\\
			 & 2\,s & 3.25 $\pm$ 0.02 & 4.45 $\pm$ 0.07 & 4.21 $\pm$ 0.03 & 3.85 $\pm$ 0.10 & 3.93 $\pm$ 0.13 & 3.37 $\pm$ 0.08 & 4.11 $\pm$ 0.02 & 2.91 $\pm$ 0.03 & 2.93 $\pm$ 0.07 & 2.93 $\pm$ 0.06 & 2.79 $\pm$ 0.09\\
			 & 3\,s & 4.40 $\pm$ 0.02 & 5.64 $\pm$ 0.09 & 5.21 $\pm$ 0.03 & 5.05 $\pm$ 0.13 & 5.11 $\pm$ 0.19 & 4.34 $\pm$ 0.08 & 4.67 $\pm$ 0.09 & 3.87 $\pm$ 0.02 & 4.01 $\pm$ 0.07 & 3.77 $\pm$ 0.10 & 3.83 $\pm$ 0.08\\
			 & 4\,s & 5.13 $\pm$ 0.02 & 6.59 $\pm$ 0.13 & 6.01 $\pm$ 0.03 & 6.17 $\pm$ 0.16 & 6.01 $\pm$ 0.22 & 4.93 $\pm$ 0.09 & 5.01 $\pm$ 0.07 & 4.46 $\pm$ 0.03 & 4.52 $\pm$ 0.11 & 4.21 $\pm$ 0.15 & 4.34 $\pm$ 0.18\\
			 & 5\,s & 5.71 $\pm$ 0.02 & 6.98 $\pm$ 0.14 & 6.73 $\pm$ 0.04 & 7.24 $\pm$ 0.20 & 6.80 $\pm$ 0.22 & 5.51 $\pm$ 0.10 & 5.41 $\pm$ 0.05 & 5.05 $\pm$ 0.04 & 4.82 $\pm$ 0.24 & 4.59 $\pm$ 0.15 & 4.27 $\pm$ 0.23\\
			\bottomrule
		\end{tabular}
		}
\end{table*}

\subsection{NGSIM}
\label{subsec:experiments_ngsim}
The \href{https://data.transportation.gov/Automobiles/Next-Generation-Simulation-NGSIM-Vehicle-Trajector/8ect-6jqj}{\textit{Next Generation Simulation} (NGSIM) dataset}  \citep{ngsim_dataset}  consists of vehicle trajectories recorded at 10\,Hz  at two different highways, US-101 and I-80, in the United States.
The dataset is  commonly used for benchmarking traffic forecasting methods and allows us to compare our method against prior art.
 We adopt the experimental setup of \citet{Deo2018ConvolutionalSP} and use both highway scenarios.
As provided in the original publication, we employ a local coordinate system that is centered around an ego-vehicle.
We split the scenarios into three 15 minute long time spans resembling mild, moderate, and congested traffic conditions and downsample each trajectory by a factor of two. 
The test set consists of a fourth of all trajectories randomly sampled from both locations.
As in the rounD experiment, we split each trajectory  into eight-second-long segments, where the first three seconds are used as the track history and the following five seconds as the prediction horizon. 

Similarly, as in \citet{GAT_NGSIM, lenz_markovian_highway, wheeler_graph}, we introduce a connection graph based on the lane position of each vehicle.  
Each vehicle has at most six connections to other vehicles, which are the nearest vehicles in front/behind on the same/left/right lane. The vehicles on the outermost lanes have a maximum of four neighbours, as there are no neighbours to the left or right.  

\subsubsection{Baselines}
We compare our approach with the following methods: 

\textit{(i) Constant Velocity \citep{mercat_ngsim}:} This method uses a linear state-space model together with a Kalman filter in order to make predictions. It does not take interactions between agents into account.

\textit{(ii)} \textit{Convolutional Social} (CS)-LSTM \citep{Deo2018ConvolutionalSP}: Interactions between vehicles are modeled by introducing a grid and applying a convolutional layer on top. 
Dynamics are modeled by a deterministic LSTM, which predicts the mean and the variance of a normal distribution at each time step. 

\textit{(iv)} \textit{Spatio-Temporal} (ST)-LSTM \citep{st_lstm}: 
Interactions are modeled with a spatio-temporal graph structure, i.e. a graph with a position and time depending component. Dynamics are modeled by a deterministic LSTM that predicts the mean and the variance of a normal distribution at each time step. 
To the best of our knowledge, this is the only GNN based method that also reports NLL.

\textit{(iv)} \textit{Multiple Futures Prediction} (MFP) \citep{multiple_futures}: A recurrent model with deterministic transition dynamics. Stochasticity is introduced via the initial state and noisy observations that are fed back into the dynamical model. Interactions are modeled by an attention module.

\subsubsection{Results}
We provide benchmark results of our proposed model in Tab. \ref{tab:ngsim_results}. 
Similar to the experiments on the rounD dataset in Sec. \ref{subsec:experiments_round}, we observe that (i) deterministic training and testing is more efficient than its Monte Carlo based alternative and (ii)
increasing the number of modes improves the performance. 

Next, we compare our method, using one component in the GMM prior only, with all other unimodal prediction methods.
We can observe that our approach outperforms its competitors in terms of NLL. Furthermore, our method gives similar RMSE as other methods and
is only outperformed by MFP. However, and in contrast to the other methods in the benchmark,
MFP reports the best RMSE over 5 Monte Carlo samples, which makes it difficult to draw a fair conclusion.

We subsequently increase the number of modes for our model and for MFP.
In terms of NLL, our method achieves superior results when it comes to long-term predictions (3s, 4s, 5s), while MFP performs better for short-term predictions (1s, 2s).
Accurate long-term prediction of the agents in a driving scene is crucial for high-level autonomous driving, as an accurate environment model is a prerequisite for precise planning of driving controls.
For instance, advanced driver-assistance systems usually use time horizons between 3s and 5s for driver warnings and emergency brakes, while autonomous cars aim for a time horizon for 5s or longer in order to ensure safe and comfortable rides \citep{philipp2019analytic}.
We provide minRMSE values for different number of modes as a measure of predictive diversity in App. \ref{app:extended_results}. 

\begin{table*}[h]
\centering
	\caption{
 NGSIM results. 
	We provide RMSE and NLL averages and standard errors over 10 runs. 
 For MFP, we report the best RMSE values over 5 Monte Carlo samples.}
	\label{tab:ngsim_results}
	
	\adjustbox{max width=0.99\textwidth}{
	\begin{tabular}{lc|c|c|c|c|c|c|c|c|c|c}
			\toprule
			&& Const. Vel. & CS-LSTM  & ST-LSTM & MFP    & MFP     & GDSSM  & GDSSM  & GDSSM   & GDSSM   & GDSSM\\
			&& 1 Mode &    1 Mode    &    1 Mode  & 1 Mode & 4 Modes & 1 Mode & 1 Mode & 2 Modes & 3 Modes & 4 Modes\\
			&         &         & & &      &         &  PLL/MC    & PLL/Det.   & PLL/Det.    & PLL/Det.    & PLL/Det.\\
			\midrule
		\parbox[t]{2mm}{\multirow{5}{*}{\rotatebox[origin=c]{90}{RMSE}}} 
			 & 1\,s & 0.75 & 0.61  & 0.51 & 0.54 & 0.54 
    & 0.56 $\pm$ 0.00 & 0.53 $\pm$ 0.01 &  0.55 $\pm$ 0.00 &  0.55 $\pm$ 0.01 &  0.57 $\pm$ 0.02\\ 
			 & 2\,s & 1.81 & 1.27  & 1.21 & 1.16 & 1.17 
    & 1.27 $\pm$ 0.02 & 1.18 $\pm$ 0.01 &  1.22 $\pm$ 0.01 &  1.23 $\pm$ 0.02 &  1.24 $\pm$ 0.04\\
			 & 3\,s & 3.16 & 2.09  & 2.01 & 1.90 & 1.91
    & 2.11 $\pm$ 0.03 & 1.98 $\pm$ 0.02 &  2.02 $\pm$ 0.02 &  2.05 $\pm$ 0.03 &  2.06 $\pm$ 0.04\\
			 & 4\,s & 4.80 & 3.10  & 3.01 & 2.78 & 2.75
    & 3.16 $\pm$ 0.04 & 2.99 $\pm$ 0.03 &  3.03 $\pm$ 0.04 &  3.06 $\pm$ 0.05 & 3.05 $\pm$ 0.06\\
			 & 5\,s & 6.70 & 4.37  & 4.31 & 3.83 & 3.78
    & 4.47 $\pm$ 0.05 & 4.29 $\pm$ 0.04 &  4.33 $\pm$ 0.07 &  4.33 $\pm$ 0.07 &  4.35 $\pm$ 0.08\\
			\midrule
		    \parbox[t]{2mm}{\multirow{5}{*}{\rotatebox[origin=c]{90}{NLL}}}
		     & 1\,s & 0.80 & 0.58 & 0.90 & 0.73 & -0.65 & 0.64 $\pm$ 0.04 & 0.19 $\pm$ 0.02 & -0.12 $\pm$ 0.02 & -0.15 $\pm$ 0.03 & -0.16 $\pm$ 0.03\\
			 & 2\,s & 2.30 & 2.14  & 2.41 & 2.33 & 1.19  & 2.10 $\pm$ 0.10 & 1.61 $\pm$ 0.02 & 1.37 $\pm$ 0.02 & 1.35 $\pm$ 0.02 & 1.32 $\pm$ 0.02\\
			 & 3\,s & 3.21 & 3.03  & 3.25 & 3.17 & 2.28  & 2.92 $\pm$ 0.11 & 2.42 $\pm$ 0.02 & 2.23 $\pm$ 0.02 & 2.23 $\pm$ 0.02 & 2.19 $\pm$ 0.02\\
			 & 4\,s & 3.89 & 3.68  & 3.61 & 3.77 & 3.06  & 3.54 $\pm$ 0.10 & 3.02 $\pm$ 0.02 & 2.88 $\pm$ 0.02 & 2.88 $\pm$ 0.02 & 2.82 $\pm$ 0.02\\
			 & 5\,s & 4.44 & 4.22  & 4.36 & 4.26 & 3.69  & 4.04 $\pm$ 0.10 & 3.50 $\pm$ 0.02 & 3.42 $\pm$ 0.02 & 3.41 $\pm$ 0.02 & 3.36 $\pm$ 0.02\\
			\bottomrule
		\end{tabular}
		}
\end{table*}

\subsection{Covariance Approximations}
\label{subsec:experiments_numerical}
We provide a detailed runtime analysis for different covariance approximations in Sec. \ref{subsec:experiments_numerical_time} and study their impact on the predictive performance in Sec. \ref{subsec:experiments_numerical_accuracy}.

\subsubsection{Runtime}
\label{subsec:experiments_numerical_time}
We visualize the runtime of different covariance approximations, as well as the runtime of the Monte Carlo alternative in Fig. \ref{fig:timing_covariance_approximation} as a function of input dimensionality and number of agents. We use the same NSDE architecture as in our experiments on the rounD and NGSIM dataset. For the Monte Carlo alternative, we visualize the runtime for 16 particles, as we use the same number of particles for training in Sec.~\ref{subsec:experiments_round} and ~\ref{subsec:experiments_ngsim}.

We first confirm that propagation of the full covariance matrix is more costly than any of the proposed approximations. 
In fact, our sparse covariance approximations can reduce the runtime up to a factor of 100 for systems with a large number of agents and a high input dimensionality.
As we derived in Sec. \ref{subsec:det_gdlgm_cov}, 
when using the BMM algorithm with a full covariance matrix or the all diagonals approximation, the computational cost shows a cubic dependence on the number of agents.
In contrast, the main diagonal approximation, the main blocks approximation, as well as MC based predictions have a quadratic dependence on the number of agents. 
This makes these two approximations an attractive alternative to MC based predictions, when systems with a high number of agents need to be modeled with a limited computational budget.

For systems with a moderate input dimensionality ($\approx 8$) and number of agents  ($\approx 16$), all of our proposed approximations require only up to $5\,\text{ms}$ in order to compute the distribution at the next time point, which corresponds to the cost of 5-10 Monte Carlo simulations.

\begin{figure}[h]
    \centering
    \includegraphics[width=.95\columnwidth ]{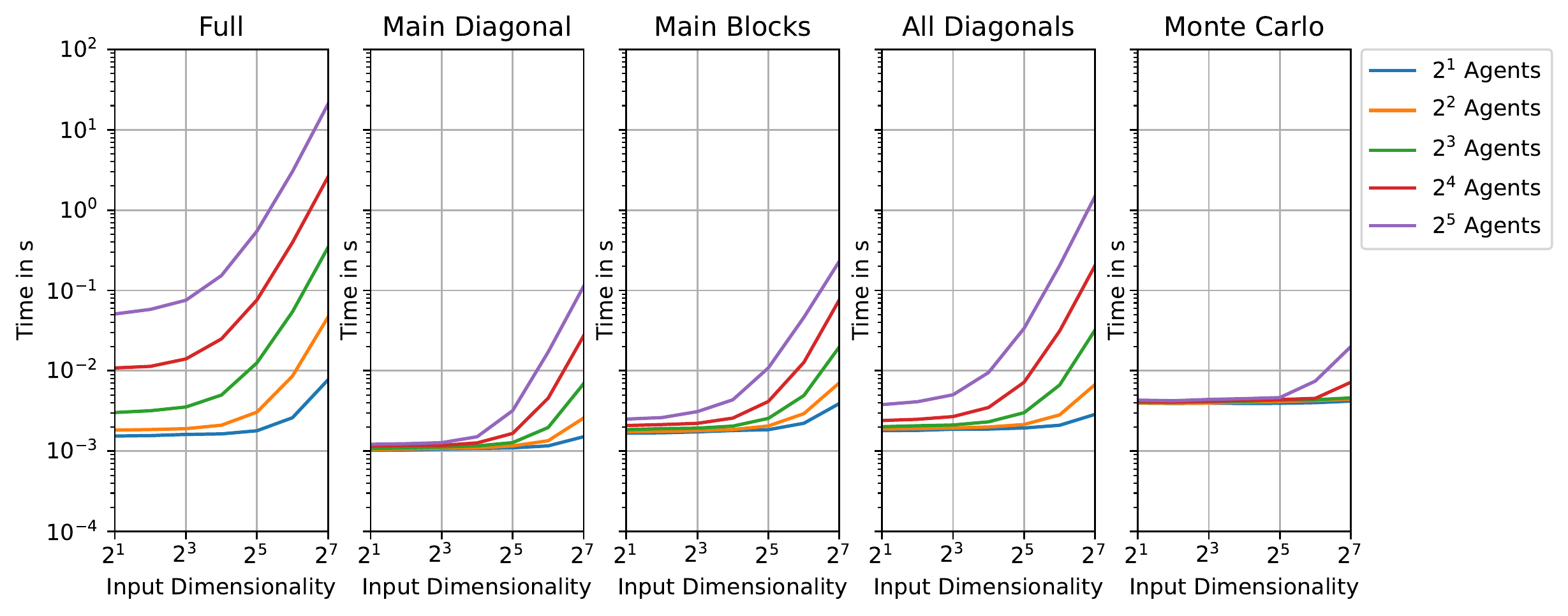}
    \caption{Wallclock time for output moment calculation for the latent dynamics using GNNs with three hidden layers of size 24 for different covariance approximations (from left to right). 
For each approximation, we plot the runtime as a function of the input dimensionality $D_x$ and number of agents $M$. 
The GNNs are initialized at random and we report the average runtime over 100 repetitions.  }
    \label{fig:timing_covariance_approximation}
\end{figure}

\subsubsection{Benchmark}
\label{subsec:experiments_numerical_accuracy}

Next, we study the effect of different covariance approximations on the performance. We report the results for the case of a unimodal GDSSM. 
The results are depicted in Tab. \ref{tab:benchmark_covariance_approximation}.  We report here our main findings.  

First, modeling the full covariance matrix results in the best performance in terms of lowest RMSE and NLL. 
Second, the all diagonals covariance approximation performs the best among the covariance approximations.
It achieves comparable RMSE as the full solution, but falls slightly behind in NLL when the system is highly interactive (see rounD dataset).
In this setting, it outperforms the other two sparse approximations with respect to NLL. 
This behavior might be explained by the assumptions made in the covariance approximation: it is the only approximation that allows for correlations between agents as its structure only neglects dependencies between latent features.

Third, the differences in performances between the full solution and the different approximations with respect to RMSE
lie between one and two standard errors. 
Modeling the main diagonal only can thus be sufficient for applications with low computational resources and a high demand for accuracy by accepting a slight loss in calibration.
For these applications, the runtime can be reduced from $O(M^3)$ to $O(M^2)$.

\begin{table*}[h]
\centering
	\caption{Test performance for different covariance approximations on the rounD and NGSIM dataset for the unimodal case. We provide average and standard error over 10 runs.}
\label{tab:benchmark_covariance_approximation}
	
	\adjustbox{max width=0.9\textwidth}{
	\begin{tabular}{lc|c|c|c|c|c|c|c|c}
			\toprule
			&& \multicolumn{4}{c|}{rounD} & \multicolumn{4}{c}{NGSIM} \\
			\cmidrule{3-10}
			&& Full & Main Diagonal & Main Blocks & All Diagonals & Full & Main Diagonal & Main Blocks & All Diagonals\\
			\midrule
			\parbox[t]{2mm}{\multirow{5}{*}{\rotatebox[origin=c]{90}{RMSE}}} 
			 & 1\,s & 0.79 $\pm$ 0.02& 0.82 $\pm$ 0.02 & 0.83 $\pm$ 0.04 & 0.78 $\pm$ 0.02 & 0.53 $\pm$ 0.01 & 0.54 $\pm$ 0.01 & 0.55 $\pm$ 0.01 & 0.53 $\pm$ 0.01 \\ 
			 & 2\,s & 1.87 $\pm$ 0.02& 1.88 $\pm$ 0.02 & 1.89 $\pm$ 0.05 & 1.87 $\pm$ 0.02 & 1.18 $\pm$ 0.01 & 1.18 $\pm$ 0.02 & 1.19 $\pm$ 0.02 & 1.19 $\pm$ 0.02 \\
			 & 3\,s & 3.36 $\pm$ 0.03& 3.40 $\pm$ 0.02 & 3.38 $\pm$ 0.06 & 3.34 $\pm$ 0.02 & 1.98 $\pm$ 0.02 & 1.99 $\pm$ 0.03 & 2.05 $\pm$ 0.03 & 1.99 $\pm$ 0.02 \\
			 & 4\,s & 5.08 $\pm$ 0.04& 5.07 $\pm$ 0.04 & 5.09 $\pm$ 0.07 & 5.05 $\pm$ 0.03 & 2.99 $\pm$ 0.03 & 2.98 $\pm$ 0.04 & 3.07 $\pm$ 0.04 & 2.97 $\pm$ 0.03 \\
			 & 5\,s & 7.24 $\pm$ 0.05& 7.25 $\pm$ 0.06 & 7.30 $\pm$ 0.08 & 7.25 $\pm$ 0.05 & 4.29 $\pm$ 0.04 & 4.34 $\pm$ 0.05 & 4.54 $\pm$ 0.06 & 4.32 $\pm$ 0.04 \\
			\midrule
		    \parbox[t]{2mm}{\multirow{5}{*}{\rotatebox[origin=c]{90}{NLL}}} 
		     & 1\,s & 1.48 $\pm$ 0.05& 1.77 $\pm$ 0.06 & 1.79 $\pm$ 0.05 & 1.69 $\pm$ 0.03 & 0.19 $\pm$ 0.02 & 0.24 $\pm$ 0.04 & 0.22 $\pm$ 0.03 & 0.18 $\pm$ 0.03 \\
			 & 2\,s & 2.91 $\pm$ 0.03& 3.34 $\pm$ 0.04 & 3.35 $\pm$ 0.06 & 3.25 $\pm$ 0.02 & 1.61 $\pm$ 0.02 & 1.63 $\pm$ 0.03 & 1.64 $\pm$ 0.03 & 1.59 $\pm$ 0.03 \\
			 & 3\,s & 3.87 $\pm$ 0.02& 4.27 $\pm$ 0.03 & 4.28 $\pm$ 0.05 & 4.18 $\pm$ 0.02 & 2.42 $\pm$ 0.02 & 2.44 $\pm$ 0.02 & 2.46 $\pm$ 0.03 & 2.43 $\pm$ 0.02 \\
			 & 4\,s & 4.46 $\pm$ 0.03& 5.00 $\pm$ 0.02 & 5.01 $\pm$ 0.05 & 4.92 $\pm$ 0.02 & 3.02 $\pm$ 0.02 & 3.04 $\pm$ 0.02 & 3.06 $\pm$ 0.04 & 3.01 $\pm$ 0.02 \\
			 & 5\,s & 5.05 $\pm$ 0.04& 5.69 $\pm$ 0.02 & 5.68 $\pm$ 0.06 & 5.64 $\pm$ 0.05 & 3.50 $\pm$ 0.02 & 3.59 $\pm$ 0.02 & 3.62 $\pm$ 0.03 & 3.57 $\pm$ 0.02 \\
			\bottomrule
		\end{tabular}
		}
\end{table*}

\subsection{Out-of-Distribution Testing}
\label{subsec:experiments_ood_testing}
We analyze the generalization capabilities of our model by testing it on  out-of-distribution data, e.g. traffic environments that have not been observed during training. 
\\ 
\\
For this experiment, we reuse the  \href{https://www.round-dataset.com/}{rounD dataset} \citep{rounDdataset} since it consists of recordings at three different roundabouts. 
Here, each roundabout corresponds to a separate traffic environment. 
We select one recording from each roundabout (Kackerstraße in Aachen, Thiergarten in Alsdorf and Neuweiler near Aachen) and apply the same data curation steps as in Sec. \ref{subsec:experiments_round}.
In order to generalize between different traffic environements, we change the experimental setup as follows:
\begin{itemize}
\item 
\textbf{Local coordinate system:} We
transform the data into a local coordinate system, which is centered at the ego-vehicle and is oriented to the heading direction of the ego-vehicle.
\item 
\textbf{Include map information to the context variable $\mathcal{I}$.}
We extract a local map around the ego-vehicle, which spans a rectangle with a length of 74\,meters and a width of 44\,meters. 
Afterwards, we apply a binary masking to the map which divides the image into drivable and non-drivable areas.
Our task is to jointly model all vehicles, which lie within this rectangle.  
\end{itemize}
More details on the preprocessing can be found in App. \ref{app:data_preprocesing}.
\subsubsection{Results}
 Next, we analyze the generalization capabilities of our model by comparing the following strategies:
 (1) training on two traffic environments and testing on a third distinct traffic environment, (2) directly training the model on the test traffic environment and (3) training on all traffic environments.
In order to enable a fair comparison, the data for each traffic environment is split into non-overlapping training and test sets.
\\
\\
 We present the results for different traffic environments in Tab. \ref{tab:results_ood_testing}. The first column in Tab. \ref{tab:results_ood_testing} describes the RMSE and NLL when we train our model on the traffic environments Thiergarten (T) and Neuweiler (N) and test it on the traffic environment Kackertstraße (K). 
 The second column describes the predictive performance by using scenes from the same traffic environment (K) for training and testing.
 The third column describes the predictive performance by training on all three traffic environments (KNT) and testing on the traffic environment K. 
 The remaining columns benchmark the generalization capabilities of our model on the traffic environments T and N and are set up in an analogous fashion. 
\\
\\
In all experiments, we observe that the performance
increases from (1) using different training environments for training and testing, to (2) using the same environment for training and testing and (3) using all environments for training.
The difference in performance between (1) and (2) is moderate for the locations K and T demonstrating that our model is capable of generalizing to unseen traffic environments during testing.
For location N, the performance difference is increased which can be explained by studying the locations in more detail:
N is a multi-lane roundabout with congested traffic, the roundabouts K and T are single-lane with moderate traffic. 
In consequence, the out-of-domain test on the traffic environment N is more challenging. 
We visualize exemplary predictions of our model GDSSM in Fig. \ref{fig:ood_predictions}.
\begin{table*}[h]
\centering
	\caption{Test performance on different traffic environments on the rounD dataset using our method GDSSM (1 mode). We vary for each test traffic environment the training traffic environments.  We provide averages and standard errors over 10 runs. Kackertstraße=K, Thiergarten=T, Neuweiler=N. }
\label{tab:results_ood_testing}
	\adjustbox{max width=0.9\textwidth}{
	\begin{tabular}{lc|c|c|c|c|c|c|c|c|c}
			\toprule
			\multicolumn{2}{c|}{Test} & \multicolumn{3}{c|}{K} & \multicolumn{3}{c|}{T} & \multicolumn{3}{c}{N} \\
			\cmidrule{3-11}
			\multicolumn{2}{c|}{Train}  & TN & K & KTN & KN & T & KTN & KT & N & KTN\\
			\midrule
			\parbox[t]{2mm}{\multirow{5}{*}{\rotatebox[origin=c]{90}{RMSE}}} 
			 & 1\,s & 2.12 $\pm$ 0.09 & 1.88 $\pm$ 0.05 & 1.55 $\pm$ 0.04 & 1.42 $\pm$ 0.04 & 1.49 $\pm$ 0.05 & 1.21 $\pm$ 0.04 & 2.98$\pm$ 0.12 & 2.02 $\pm$ 0.07 & 1.55 $\pm$ 0.04\\ 
			 & 2\,s & 4.56 $\pm$ 0.13 & 3.73 $\pm$ 0.05 & 3.34 $\pm$ 0.09 & 2.82 $\pm$ 0.05 & 2.53 $\pm$ 0.09 & 2.27 $\pm$ 0.04 & 5.82$\pm$ 0.19 & 3.38 $\pm$ 0.06 & 3.02 $\pm$ 0.09\\
			 & 3\,s & 7.51 $\pm$ 0.24 & 6.05 $\pm$ 0.17 & 5.62 $\pm$ 0.20 & 4.42 $\pm$ 0.12 & 3.67 $\pm$ 0.12 & 3.35 $\pm$ 0.07 & 8.96$\pm$ 0.24 & 4.89 $\pm$ 0.13 & 4.91 $\pm$ 0.18\\
			 & 4\,s & 10.57 $\pm$ 0.34 & 8.65 $\pm$ 0.26 & 7.99 $\pm$ 0.17 & 6.75 $\pm$ 0.18 & 5.33 $\pm$ 0.15 & 5.00 $\pm$ 0.16 & 12.44$\pm$ 0.33 & 6.81 $\pm$ 0.14 & 6.91 $\pm$ 0.18\\
			 & 5\,s & 13.00 $\pm$ 0.31 & 12.02 $\pm$ 0.43 & 10.73 $\pm$ 0.27 & 9.97 $\pm$ 0.40 & 7.59 $\pm$ 0.29 & 7.41 $\pm$ 0.27 & 14.86$\pm$ 0.42 & 8.77 $\pm$ 0.19 & 8.69 $\pm$ 0.31\\
			\midrule
		    \parbox[t]{2mm}{\multirow{5}{*}{\rotatebox[origin=c]{90}{NLL}}} 
		     & 1\,s & 4.11 $\pm$ 0.07 & 4.02 $\pm$ 0.04 & 3.13 $\pm$ 0.06 & 3.35 $\pm$ 0.06 & 3.16 $\pm$ 0.06 & 2.88 $\pm$ 0.06 & 4.75$\pm$ 0.07 & 3.84 $\pm$ 0.04 & 3.21 $\pm$ 0.08\\
			 & 2\,s & 5.36 $\pm$ 0.10 & 4.84 $\pm$ 0.03 & 4.66 $\pm$ 0.09 & 4.22 $\pm$ 0.04 & 3.99 $\pm$ 0.07 & 3.65 $\pm$ 0.04 & 5.82$\pm$ 0.09 & 4.60 $\pm$ 0.05 & 4.42 $\pm$ 0.09\\
			 & 3\,s & 6.55 $\pm$ 0.17 & 5.97 $\pm$ 0.07 & 6.42 $\pm$ 0.17 & 5.49 $\pm$ 0.10 & 4.98 $\pm$ 0.11 & 4.50 $\pm$ 0.05 & 7.32$\pm$ 0.17 & 5.55 $\pm$ 0.04 & 5.68 $\pm$ 0.09\\
			 & 4\,s & 7.71 $\pm$ 0.20 & 7.07 $\pm$ 0.14 & 7.57 $\pm$ 0.21 & 7.20 $\pm$ 0.09 & 6.38 $\pm$ 0.19 & 5.89 $\pm$ 0.13 & 8.65$\pm$ 0.18 & 6.79 $\pm$ 0.11 & 6.71 $\pm$ 0.08\\
			 & 5\,s & 8.63 $\pm$ 0.18 & 8.14 $\pm$ 0.18 & 8.08 $\pm$ 0.15 & 8.54 $\pm$ 0.12 & 7.45 $\pm$ 0.21 & 6.84 $\pm$ 0.11 & 9.11$\pm$ 0.18 & 7.54 $\pm$ 0.22 & 7.04 $\pm$ 0.14\\
			\bottomrule
		\end{tabular}
		}
\end{table*}

\begin{figure}[t]
\centering
\begin{subfigure}{.5\textwidth}
  \centering
  \includegraphics[height=2.9cm]{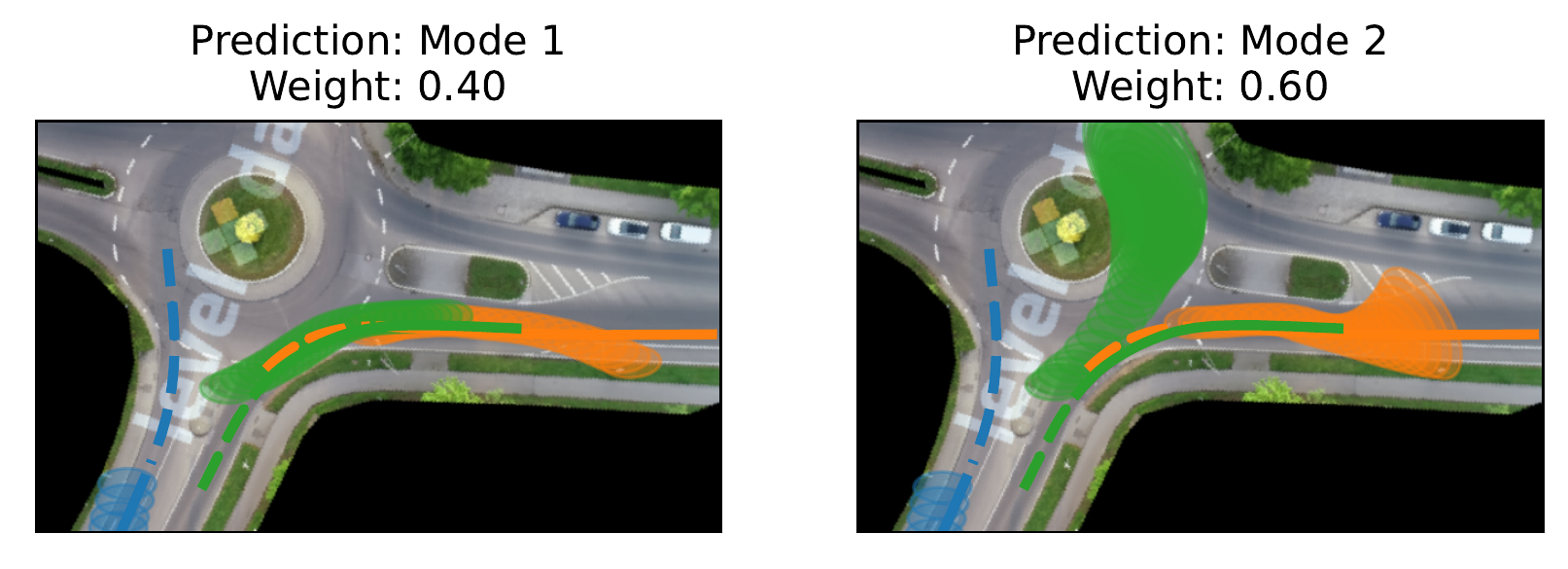}
  \caption{Kackertstraße (K)}
\end{subfigure}%
\begin{subfigure}{.5\textwidth}
  \centering
  \includegraphics[height=2.9cm]{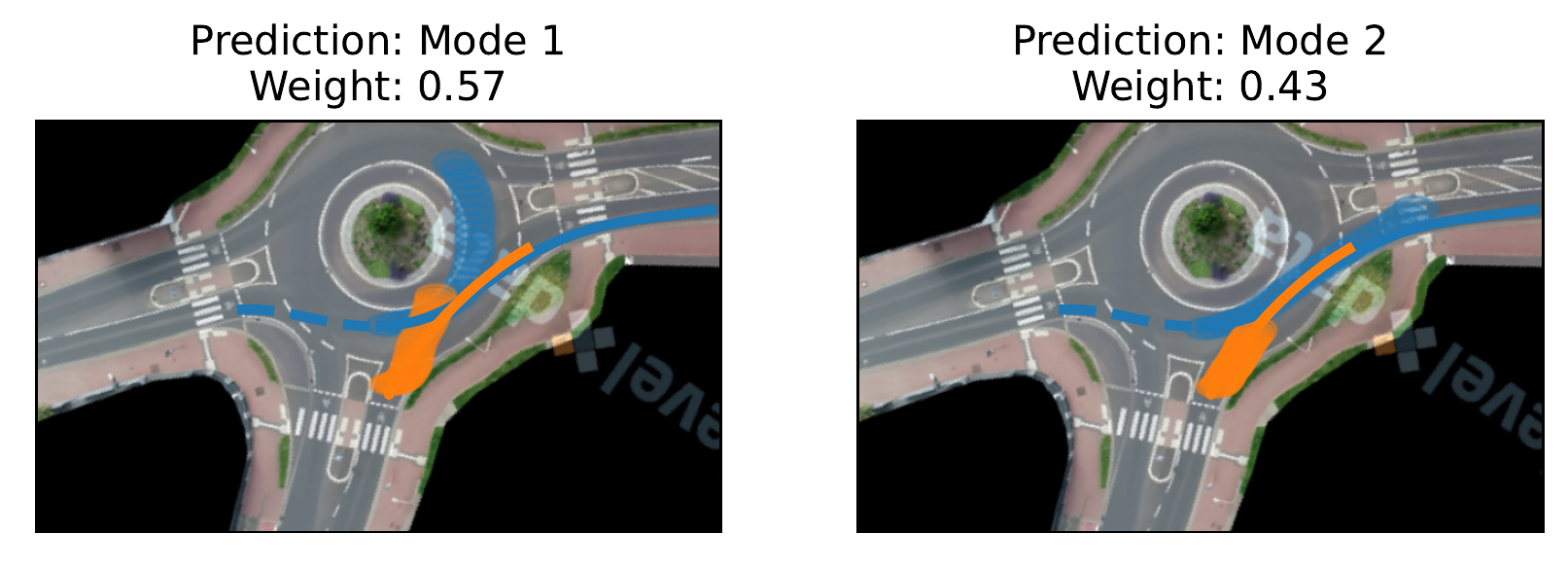}
  \caption{Thiergarten (T)}
\end{subfigure}%
\caption{Predictions of our model GDSSM (2 modes) on out-of-domain traffic environments, i.e. the training and testing traffic environments are distinct. Given the history of each traffic participant (dashed lines), we visualize the predicted 95\% confidence interval. Solid lines represent the true future trajectory.  
}
\label{fig:ood_predictions}
\end{figure}

\section{Conclusion}
\label{sec:conclusion}

In this work, we have proposed GDSSMs in which the latent dynamics of the agents are coupled via GNNs in order to capture interactions among multiple agents.
We derived moment matching rules for GNN layers that allow for deterministic inference and introduced a GMM prior over the initial latent states in order to allow for multimodal predictions.
Both together lead to an efficient and stable algorithm that is able to produce complex and nonlinear predictive distributions.
We confirmed that our novel method shows strong empirical performance on two challenging autonomous driving datasets.
Finally, we proposed sparse approximations to the covariance matrix considering the computational limits of real-world vehicle control units.
Depending on the required calibration, our approximations can lead to a significant reduction of the runtime without impeding accuracy.

In future work, we seek to increase the robustness of our proposed model towards novel and unseen traffic scenarios. 
One way could be to incorporate epistemic uncertainty into our model formulation by placing a prior over the weights of the GNN.
To achieve this, we could combine our model with recent advances in variational inference in order to find an approximation to the intractable weight posterior. 
Another research direction of interest is modeling of irregular and partially observed  dynamical systems. 
Prior work uses continuous time encoder networks as well as a continuous time transition model in latent space~\citep{rubanova2019latent, gru_ode}.
Following this vein of work, an extension of our model towards continuous time networks seems a promising direction for modeling interactive systems.

\bibliography{tmlr}
\bibliographystyle{tmlr}

\newpage
\appendix
\section{Proof of Theorem \ref{thm:theorem_1}}
\label{app:proof_theorem_1}
\setcounter{theorem}{0}    
\begin{theorem}
The marginal distribution $p(y_t | \mathcal{I})$ is analytically computed as 
\begin{align}
p(y_{t}|\mathcal{I})&= \sum_{v=1}^V \pi_v(\mathcal{I}) \mathcal{N}(y_t \vert a_{t,v}(\mathcal{I}), B_{t,v}(\mathcal{I})) \nonumber,
\end{align}
for a GDSSM with the below generative model
\begin{align}
v &\sim \text{Cat}([\pi_1(\mathcal{I}), \ldots, \pi_V(\mathcal{I})]), \nonumber\\
x_0 &\sim  \mathcal{N}(\mu_{0,v}(\mathcal{I}), \text{diag}(\Sigma_{0,v}(\mathcal{I}))),\nonumber\\
x_t & \sim \mathcal{N} \left(x_t | x_{t-1} + f(t,v, \mathcal{I})x_{t-1},  \text{diag} \left(L(t,v, \mathcal{I}) \right)   \right),  &t= 1, \ldots, T \nonumber\\
y_t &\sim \mathcal{N} \left(y_t|g(t,v, \mathcal{I}) x_{t}, \text{diag} \left( \Gamma(t,v, \mathcal{I}) \right)   \right),&t=1, \ldots, T \nonumber
\end{align}
where $f(t,v, \mathcal{I}), L(t,v, \mathcal{I}), g(t,v, \mathcal{I}), \Gamma(t,v, \mathcal{I})$ are time $t$, component $v$, and context $\mathcal{I}$ depending matrices with appropriate dimensionality. 
\end{theorem}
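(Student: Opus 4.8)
The plan is to exploit the observation that, conditioned on the mixture index $v$, the generative model stated in the theorem is a \emph{linear-Gaussian} state-space model: the transition mean $x_{t-1} + f(t,v,\mathcal{I})x_{t-1} = (I + f(t,v,\mathcal{I}))x_{t-1}$ is an affine function of $x_{t-1}$, the emission mean $g(t,v,\mathcal{I})x_t$ is an affine function of $x_t$, and both innovation covariances $\text{diag}(L(t,v,\mathcal{I}))$ and $\text{diag}(\Gamma(t,v,\mathcal{I}))$ are constant in the state. I would therefore establish by induction on $t$ that each conditional marginal $p(x_{t,v}\mid\mathcal{I})$ is \emph{exactly} Gaussian, push this distribution through the emission to obtain a Gaussian $p(y_t\mid v,\mathcal{I})$, and finally marginalize the categorical variable $v$ to recover the claimed mixture. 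The essential point is that the moment-matching recursion of Sec.~\ref{subsec:det_gdlgm_marginal}, which is only approximate for general nonlinear $f,g$, incurs zero approximation error in this linear setting.

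Next I would carry out the induction. The base case $p(x_{0,v}\mid\mathcal{I}) = \mathcal{N}(\mu_{0,v}(\mathcal{I}),\text{diag}(\Sigma_{0,v}(\mathcal{I})))$ holds by definition. For the inductive step, assume $p(x_{t-1,v}\mid\mathcal{I}) = \mathcal{N}(\mu_{t-1,v},\Sigma_{t-1,v})$; since $x_t$ is an affine image of $x_{t-1}$ plus independent Gaussian noise, $p(x_{t,v}\mid\mathcal{I})$ is again Gaussian, and its first two moments coincide with those produced by the horizontal update in Eq.~\eqref{eq:moment_matching_update_rules}. The one term that deserves attention is the cross-covariance $\mathrm{Cov}[x_{t-1}, f(x_{t-1},\mathcal{I})]$: for a linear $f$ the Jacobian is the constant matrix $f(t,v,\mathcal{I})$, so Stein's lemma yields $\mathrm{Cov}[x_{t-1}, f(x_{t-1})] = \Sigma_{t-1,v}\, f(t,v,\mathcal{I})^T$ without approximation, and the expected-Jacobian product used by BMM collapses to this single constant factor. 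Hence every term in the update is evaluated exactly and the Gaussian form propagates along the time axis.

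Finally I would apply the emission step and marginalize. Because $y_t = g(t,v,\mathcal{I})x_t + \epsilon$ with $\epsilon\sim\mathcal{N}(0,\text{diag}(\Gamma(t,v,\mathcal{I})))$ and $x_{t,v}$ Gaussian, the variable $p(y_t\mid v,\mathcal{I})$ is Gaussian with mean $a_{t,v}(\mathcal{I}) = g(t,v,\mathcal{I})\mu_{t,v}$ and covariance $B_{t,v}(\mathcal{I}) = g(t,v,\mathcal{I})\Sigma_{t,v}\,g(t,v,\mathcal{I})^T + \text{diag}(\Gamma(t,v,\mathcal{I}))$, matching Eq.~\eqref{eq:output_moments_marginal}. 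Since $v$ is categorical with weights $\pi_v(\mathcal{I})$, marginalization gives $p(y_t\mid\mathcal{I}) = \sum_{v=1}^V \pi_v(\mathcal{I})\, p(y_t\mid v,\mathcal{I})$, which is precisely the stated GMM. I expect no genuine obstacle here; the only substantive content is verifying the exactness claim, which reduces entirely to the exactness of Stein's lemma for affine maps noted in the previous step.
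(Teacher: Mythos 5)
Your proposal is correct and follows essentially the same route as the paper's proof: condition on the mixture index $v$, observe that the model is then an exactly linear-Gaussian state-space model so the latent marginal $p(x_{t,v}\mid\mathcal{I})$ stays Gaussian (the paper writes this via the closed-form product formulas for $\mu_{t,v}$ and $\Sigma_{t,v}$ rather than an explicit induction, and likewise notes that the BMM/Stein updates are exact here), push through the linear emission, and marginalize over $v$ to obtain the Gaussian mixture.
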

\begin{proof} The proof is straightforward as the output moment of the transition and emission function are analytically available
\begin{align*}
    p(y_t|\mathcal{I})
    &=\int p(y_t| v, x_t, \mathcal{I})p(x_t|v, x_0, \mathcal{I}) p(v, x_0 |\mathcal{I})d vd x_0dx_t\\
    &=
    \sum_{v=1}^V  \pi_v(\mathcal{I}) 
    \int p(y_t|v, x_t, \mathcal{I})p(x_t|  v, x_0, \mathcal{I})
    \mathcal{N}(x_0 | \mu_{0,v}(\mathcal{I}), \text{diag}(\Sigma_{0,v}(\mathcal{I}))) d x_0dx_t\\
&= \sum_{v=1}^V
    \int p(y_t|v, x_t, \mathcal{I}) 
    \mathcal{N}(x_t | \mu_{t,v}(\mathcal{I}), \Sigma_{t,v}(\mathcal{I}))
    d x_t\\
    &=  \sum_{v=1}^V \pi_v(\mathcal{I}) \mathcal{N}(y_t \vert a_{t,v}(\mathcal{I}), B_{t,v}(\mathcal{I})).
\end{align*}
The moments at time step $t$ of a linear time-depending dynamical system are available as \citep{bayesian_filtering}
\begin{align*}
\mu_{t,v}(\mathcal{I}) &= \prod_{t'=1}^t \left(I+f(t',v, \mathcal{I})\right) \mu_{0,v}(\mathcal{I}),\\
\Sigma_{t,v}(\mathcal{I})&=
\left[
 \prod_{t'=1}^t (I+f(t',v, \mathcal{I})) 
 \right] 
 \text{diag}(\Sigma_{0,v}(\mathcal{I}))
 \left[
            \prod_{t'=1}^t (I+f(t',v, \mathcal{I}))^T
            \right]\\
            &
            + \sum_{t'=1}^{t-1} 
            \left[
            \prod_{t''=t'+1}^t (I+f(t'',v, \mathcal{I}))  
            \right]
            \text{diag} (L(t', v,\mathcal{I}))
            \left[
            \prod_{t''=t'+1}^t (I+f(t'', v, \mathcal{I}))^T  
            \right]
            \\
            &
            + \text{diag} (L(t, v,\mathcal{I}))
\end{align*}
We obtain the same expression via the BMM algorithm, which is easy to prove by inserting the locally linear system into Eq. \ref{eq:moment_matching_update_rules}. Finally, mean $a_{t,v}(\mathcal{I})$ and covariance $B_{t,v}(\mathcal{I})$ of the output at time step $t$ are available as \citep{bayesian_filtering}
\begin{align*}
a_{t,v}(\mathcal{I})&= \mathbb{E}[g(t, v, \mathcal{I}) x_t] = g(t, v, \mathcal{I}) \mu_{t,v}(\mathcal{I}),\\
    B_{t,v}(\mathcal{I})&= \text{Cov}[g(t, v, \mathcal{I}) x_t] + \text{diag}\left(\mathbb{E}[  \Gamma(t)]\right)
    = g(t, v, \mathcal{I}) \Sigma_{t,v}(\mathcal{I}) g(t, v, \mathcal{I})^T  + \text{diag}\left(  \Gamma(t, v, \mathcal{I})\right).
\end{align*}
\end{proof}

\section{Output Moments for the ReLU Activation Function}
\label{app:nonlinearities}
Suppose, we apply the ReLU activation function at layer $l$ and time step $t$ to the input $x_t^l$
\begin{equation*}
    x_t^{l+1} = \max(0, x_t^l).
\end{equation*}
\textbf{Output Moments} We can approximate the output moments of $x_t^{l+1}$ as a function of the input moments \citep{wu2018deterministic}
\begin{align*}
     \mathbb{E}[\max(0, x_t^l)]\approx &\sqrt{\text{diag}( 
\mathrm{Cov}[x_t^l])} \ \text{SR}\left( \mathbb{E}[x_t^l]/\sqrt{\text{diag}( 
\mathrm{Cov}[x_t^l])}\right), \nonumber\\
     \mathrm{Cov}[\max(0, x_t^l)] \approx &\sqrt{\text{diag}( 
\mathrm{Cov}[x_t^l]) \text{diag}( 
\mathrm{Cov}[x_t^l])^T}
\ {F} \left( \mathbb{E}[x_t^l],\mathrm{Cov}[x_t^l] \right),
\end{align*}
where $\text{SR}(x)= \phi(x) +  x \Phi(x)$ with
 $\phi$ and $\Phi$ representing the PDF and CDF of a standard Gaussian variable.
The function ${F}( \mathbb{E}[x_t^l],\mathrm{Cov}[x_t^l] )$ is a shorthand notation for 
 \begin{equation*}
{F}( \mathbb{E}[x_t^l],\mathrm{Cov}[x_t^l] )= A \left( \mathbb{E}[x_t^l],\mathrm{Cov}[x_t^l] \right)+ \exp {\left[- Q \left( \mathbb{E}[x_t^l],\mathrm{Cov}[x_t^l]\right) \right]}.
\end{equation*}
The function $A( \mathbb{E}[x_t^l], \mathrm{Cov}[x_t^l] )$ can be estimated as
\begin{equation*}
     A( \mathbb{E}[x_t^l], \mathrm{Cov}[x_t^l]) ) = \text{SR}( \epsilon^l_k)\text{SR}(\epsilon^l_k)^T +  \rho^l_k \Phi( \epsilon^l_k)\Phi(\epsilon^l_k)^T,
\end{equation*}
where
 $ \rho^l_k = \mathrm{Cov}[x_t^l]/ \left(\sqrt{\text{diag}(\mathrm{Cov}[x_t^l])}\sqrt{\text{diag}(\mathrm{Cov}[x_t^l])^T} \right)$ is a dimensionless matrix and
$\epsilon^l_k=  \mathbb{E}[x_t^l]/\sqrt{\text{diag}(\mathrm{Cov}[x_t^l])}$
is a dimensionless vector.
\\
The $i,j$-th element of $ Q(\mathbb{E}[x_t^l],  \mathrm{Cov}[x_t^l])$ can be estimated as:
\begin{align*}
     Q\left(\mathbb{E}[x_t^l],  \mathrm{Cov}[x_t^l]) \right)_{i,j} =  
    \frac{\rho^l_{k_{i,j}}}{2 g^l_{k_{i,j}} (1+\bar{\rho}_{k_{i,j}})} \left( (\epsilon_{k_i}^l)^2 +(\epsilon_{k_j}^l)^2 \right)-
    \frac{\arcsin(\rho^l_{k_{i,j}}) - \rho^l_{k_{i,j}}}{\rho_{k_{i,j}} g^l_{k_{i,j}}}\epsilon_{k_i}^l\epsilon_{k_j}^l-\log \left( \frac{g^l_{k_{i,j}}}{2 \pi} \right), \nonumber
\end{align*}
where $ g^l_k=\arcsin( \rho^l_k) +  \rho^l_k \oslash \left(1+  \sqrt{1- \rho^l_k \odot  \rho^l_k} \right)$.
The operator $\oslash$ denotes the elementwise disivision and $\odot$ denotes the elementwise multiplication.
\\
\textbf{Expected Jacobian} Since activation functions are applied element-wise, off-diagonal entries of the expected gradient are zero.  The diagonal of the Jacobian of the ReLU function is the expected  Heaviside step function \cite{wu2018deterministic}
\begin{align*}
  \text{diag}\left(\mathbb{E}\left[\nabla _{x_t^l}  \max(0, x_t^l)  \right]\right) \approx
    \Phi\left(\mathbb{E}[x_t^l]/\sqrt{\text{diag}( \mathrm{Cov}[x_t^l]))}\right).
\end{align*}
For a more detailed derivation, we refer to the work of \cite{wu2018deterministic}.

\section{Training Details}
\label{app:training_details}
We train all models with the ADAM optimizer and stochastic mini-batches. We use a batch size of 4 and a learning rate of 0.0001. 
In order to accelerate training of multi-modal GDSSMs we initialize  the transition and observation neural nets with the pretrained versions of the uni-modal GDSSMs.
\subsection{rounD}
We train deterministic models (GDSSM Det. and Non Recur. GNN) for 50k weight updates. For the MC based models (GDSSM MC) we use 100k weight updates. 
The dataset
contains 4,314 training and 1,091 testing snippets. 
We do not use a separate validation dataset as we observed no overfitting.
\subsection{NGSIM}
We train all models for 1000k updates on the NGSIM dataset. The dataset contains 5,922k/860k/1,505k train/validation/test snippets.
We validate the models on  1k random minibatches from the validation dataset after every 10k weight updates.  
\subsection{Out-of-Distribution Testing}
We train all models for 50k weight updates.
The dataset consists of three sub-datasets with 254/ 251/ 137 snippets, where 80\% of each sub-dataset are used for training and the remaining 20\% for testing. Due to the small size of the sub-datasets we observed overfitting. We address  this by using the last 20\% of each training sub-dataset for validation. We validate the model after every 1k weight updates.

\section{Evaluation Metrics}
\label{sec:metric}
In the following, we give a brief overview on different evaluation metrics.
We provide the negative log-likelihood (NLL) and the Root-Mean-Square Error (RMSE) in the main of the paper, and minRMSE in App.~\ref{app:extended_results}.
\paragraph{Root-Mean-Square Error} 
The (root) mean-square error at time point $t$ is defined as 
\begin{equation}
\text{MSE}_\text{Base}(t)= \frac{1}{N}
\sum_{n}^N 
\sum_m^{M(n)} \frac{ (y_{t,n}^m- \hat{y}_{t,n}^m)^T (y_{t,n}^m-\hat{y}_{t,n}^m) }{M(n)},
\quad
\text{RMSE}_\text{Base}(t)=
\sqrt{
\text{MSE}_\text{Base}(t),
}
\end{equation}
where $N$ is the number of snippets in the dataset, 
$M(n)$ is the number of agents in the $n$-th snippet,
$y_{t,n}^m$ is the true location of the $m$-th agent at time point $t$ of the $n$-th snippet, and $\hat{y}_{t,n}^m$ the corresponding predicted location. 
\\
Unfortunately, it is not straight-forward to extend $\text{RMSE}_\text{base}$ to probabilistic forecasts. 
To the best of our knowledge, none of the existing variants is a strictly proper scoring rule \citep{gneiting2007strictly} where the latter is optimal if and only if the predictive distribution matches the true distribution. An example of a strictly proper scoring rule that we also provide in the paper is the predictive NLL:
\begin{equation}
\text{NLL}(t) = -\frac{1}{N}
\sum_{n}^N 
\sum_m^{M(n)} \frac{\log p(y_{t,n}^m|\mathcal{I})}{M(n)}. 
\label{eq:test_metric}
\end{equation}
For deterministic GDSSMs, we approximate $p({y}_{t,n}^m| \mathcal{I})$ directly with our method that we introduced in Sec. \ref{subsec:det_gdlgm_marginal}. 
For GDSSMs that do not follow an assumed density approach, we approximate $p({y}_{t,n}^m| \mathcal{I})$ via Monte Carlo integration.
\\
In the following, we discuss different RMSE variants in more detail. 
\paragraph{Bayes Predictor:} 
A common evaluation metric used in the ML community (e.g. \cite{dropout_gal}) is to compute 
\begin{equation}
\text{RMSE}(t) = \sqrt{\frac{1}{N}
\sum_{n}^N 
\sum_m^{M(n)} \frac{ (y_{t,n}^m- \mathbb{E}[\hat{y}_{t,n}^m])^T (y_{t,n}^m- \mathbb{E}[\hat{y}_{t,n}^m]) }{M(n)} },
\label{eq:rmse}
\end{equation}
where we assume that the probabilistic predictor will perform model averaging before making its final prediction.
This score only evaluates the goodness of the first moment and does not take any information of higher moments into account.
We provide its values in the main part of the paper.
\paragraph{Gibbs Predictor:} An alternative way for extending deterministic loss functions to the probabilistic scenario can be found in the PAC-Bayes setting (e.g. \cite{germain_pac}) by considering the average loss in the risk function. 
In our case, this would lead to 
\begin{eqnarray}
\text{RMSE}_\text{Gibbs}(t)
=
\sqrt{
\mathbb{E}
\left[
{\frac{1}{N}
\sum_{n}^N 
\sum_m^{M(n)} \frac{ (y_{t,n}^m- \hat{y}_{t,n}^m)^T (y_{t,n}^m- \hat{y}_{t,n}^m) }{M(n)} }
\right]
},
\end{eqnarray}
where we take expectation with respect to the predictions $\hat{y}_{t,n}^m$.
However, we discourage to take this formulation as evaluation metric since it penalizes any form of variance in the predictor and favors a Dirac distribution around the expected value of the true distribution. Note that this result can be tightly linked to the bias-variance trade-off in statistical learning theory (e.g. \cite{hastie2009elements}).	
\paragraph{Min Predictor:} A popular alternative in the traffic forecasting literature (e.g. \cite{multiple_futures}) is to compute the minimal error over a set of $S$ potential predictions 
\begin{equation}
\text{minRMSE}(t) = \sqrt{\frac{1}{N}
\sum_{n}^N \sum_m^{M(n)}  \underset{s=1,\dots,S}{\text{min}}
 \frac{ (y_{t,n}^m- \hat{y}_{t,n,s}^m)^T (y_{t,n}^m- \hat{y}_{t,n,s}^m) }{M(n)} },
\label{eq:minrmse}
\end{equation}
where $(\hat{y}_{t,n,s}^m)_{s=1}^S$ is the set of $S$ potential predictions.
This score favors models that produce a set of diverse predictions where at least one candidate is close to the true outcome but ignores the goodness of the remaining $S-1$ predictors.
While it gives some useful information about the model capacity, it does not give a complete representation of the test-time performance when the best mode is unknown.
We provide its values in App.~\ref{app:extended_results}.

\section{Alternative Parameter Inference Methods}
\label{app:classical_inference}
One commonly used inference method in the context of machine learning circumvents maximizing the log-likelihood and instead maximizes the \textit{ Evidence Lower Bound} (ELBO)  
\begin{equation}
    \text{ELBO}(y_1, \ldots, y_T \vert \mathcal{I}) = \mathbb{E}_{q(x_0, \ldots, x_T) }\left[\log \frac{ p(y_1, \ldots, y_T, x_0, \ldots, x_T|\mathcal{I})}{ q(x_0, \ldots, x_T)} \right],
    \label{eq:elbo}
\end{equation}
 that involves learning an approximation  $q(x_0, \ldots, x_T)$ to the intractable smoothing distribution $p(x_0, \ldots, x_T \vert y_1, \ldots, y_T, \mathcal{I})$ \citep{krishnan_inference}.   
\\
\\
A tighter bound to the log-likelihood $\log  p( y_1, \ldots, y_T | \mathcal{I})$  can be obtained by calculating the importance weighted log-likelihood, which we refer to as the \textit{Monte Carlo Objective} (MCO) \citep{fivo, iwae}
\begin{equation}
    \text{MCO}(y_1, \ldots, y_T \vert \mathcal{I}) = \mathbb{E}_{q(x_0, \ldots, x_T) }\left[\log \frac{1}{K} \sum_{k=1}^K \frac{ p(y_{1}, \ldots, y_{T}, x_{0,k}, \ldots, x_{T,}|\mathcal{I})}{ q(x_{0,k} \ldots, x_{T,k})} \right],
    \label{eq:mco}
\end{equation}
where $K$ is the number of Monte Carlo samples and $x_{t,k}$ is the $k$-th sample at time step $t$.
For state-space models, recent work combined the MCO with particle filters \citep{naesseth2018variational, fivo, le2017auto}.

\section{Extended Results}
\label{app:extended_results}
We provide minRMSE values for the rounD and NGSIM dataset. 
We set the proposal predictions to the mean values of each mode and the min operator
selects for each agent the proposal that
produces the lowest error per snippet over the complete prediction horizon.
This score favors models in which the proposal predictions are diverse as long as at least one candidate is close to the true outcome.

As shown in Tab. \ref{tab:minRMSE}, the minRMSE decreases as we increase the number of modes, which indicates that the different modes correspond to a diverse set of plausible trajectories.
 \begin{table*}[h]
\centering
	\caption{
  minRMSE values as a function of the number of modes on the rounD and NGSIM dataset. 
 We provide average and standard error over 10 runs.
 For RMSE and NLL results, please refer to Tab.~\ref{tab:round_results} and 
 Tab.~\ref{tab:ngsim_results}.}
 \label{tab:minRMSE}
	\adjustbox{max width=0.9\textwidth}{
	\begin{tabular}{lc|c|c|c|c|c|c|c|c}
			\toprule
			&& \multicolumn{4}{c|}{rounD} & \multicolumn{4}{c}{NGSIM} \\
			\cmidrule{3-10}
			&& 1 Mode & 2 Modes & 3 Modes & 4 Modes & 1 Mode & 2 Modes & 3 Modes & 4 Modes\\
			\midrule
			\parbox[t]{2mm}{\multirow{5}{*}{\rotatebox[origin=c]{90}{minRMSE}}} 
			 & 1\,s & 0.79 $\pm$ 0.02 & 0.75 $\pm$ 0.04 & 0.76 $\pm$ 0.03 & 0.69 $\pm$ 0.02 & 0.53 $\pm$ 0.01 & 0.46 $\pm$ 0.01 & 0.35 $\pm$ 0.01 & 0.32 $\pm$ 0.01 \\ 
			 & 2\,s & 1.87 $\pm$ 0.02& 1.85 $\pm$ 0.06 & 1.83 $\pm$ 0.07 & 1.68 $\pm$ 0.05 & 1.18 $\pm$ 0.01 & 1.05 $\pm$ 0.01 & 0.80 $\pm$ 0.01 & 0.71 $\pm$ 0.01 \\
			 & 3\,s & 3.36 $\pm$ 0.03& 3.46 $\pm$ 0.11 & 3.05 $\pm$ 0.16 & 2.75 $\pm$ 0.12 & 1.98 $\pm$ 0.02 & 1.73 $\pm$ 0.02 & 1.33 $\pm$ 0.02 & 1.18 $\pm$ 0.03 \\
			 & 4\,s & 5.08 $\pm$ 0.04& 5.07 $\pm$ 0.13 & 4.55 $\pm$ 0.28 & 4.04 $\pm$ 0.45 & 2.99 $\pm$ 0.03 & 2.60 $\pm$ 0.04 & 2.02 $\pm$ 0.04 & 1.75 $\pm$ 0.04 \\
			 & 5\,s & 7.24 $\pm$ 0.05& 6.29 $\pm$ 0.23 & 5.95 $\pm$ 0.47 & 4.43 $\pm$ 0.53 & 4.29 $\pm$ 0.04 & 3.69 $\pm$ 0.06 & 2.88 $\pm$ 0.05 & 2.46 $\pm$ 0.05 \\
			\bottomrule
		\end{tabular}
		}
\end{table*}

\newpage 
\section{Experimental Setup for Out-of-Distribution Testing}
\label{app:data_preprocesing}
\subsection{Dataset Construction}
We construct a dataset, which consists of three different traffic environments.
We select one recording from the roundabout in Kackertstraße (K) in Aachen, one recording from the roundabout in Thiergarten (T) in Alsdorf, and one recording from the roundabout in Neuweiler (N) near Aachen. We use the same preprocessing procedure as in Sec. \ref{subsec:experiments_round}.
We remove pedestrians, bicycles, and parked vehicles from each traffic environment.
There remain 319/ 264/ 389 tracked objects over a time span of 0.3/ 0.3/ 0.15\,hours at the traffic environments K/ T/ N. 
We downsample the recordings by a factor of 5 and then construct for each traffic environment a dataset which consists of 8\,s long snippets with 50\% overlap.
The first three seconds are used as the track history and the following five seconds as the prediction horizon. 
We obtain 254/ 251/ 137 snippets for the traffic environments K/ T/ N.
For each traffic environment we use the first 80\% snippets for training and the remaining 20\% snippets for testing.
 \subsection{Map Processing}
\label{app:map_processing}
The map processing follows existing work in the domain of traffic forecasting \cite{chaffeurnet, pedestrian_prediction}. 
Given an ego-vehicle and its history, we first calculate the heading of the vehicle. 
The heading is calculated as the average heading direction over the last 0.2 observed seconds. 
The position and the heading direction jointly define the \textit{Region-of-Interest} (ROI) on the map. 
The ROI is a rectangle with a  length of 74 meters and a width of 44 meters, which is centered at the ego-vehicle and oriented according to the heading of the ego-vehicle. 
We further convert the RGB-image into a binary road image.
 We visualize the processed map information in Fig. \ref{subfig:map_encoded}  
\begin{figure}[h]
\centering
\begin{subfigure}{.4\textwidth}
  \centering
  \includegraphics[height=6cm]{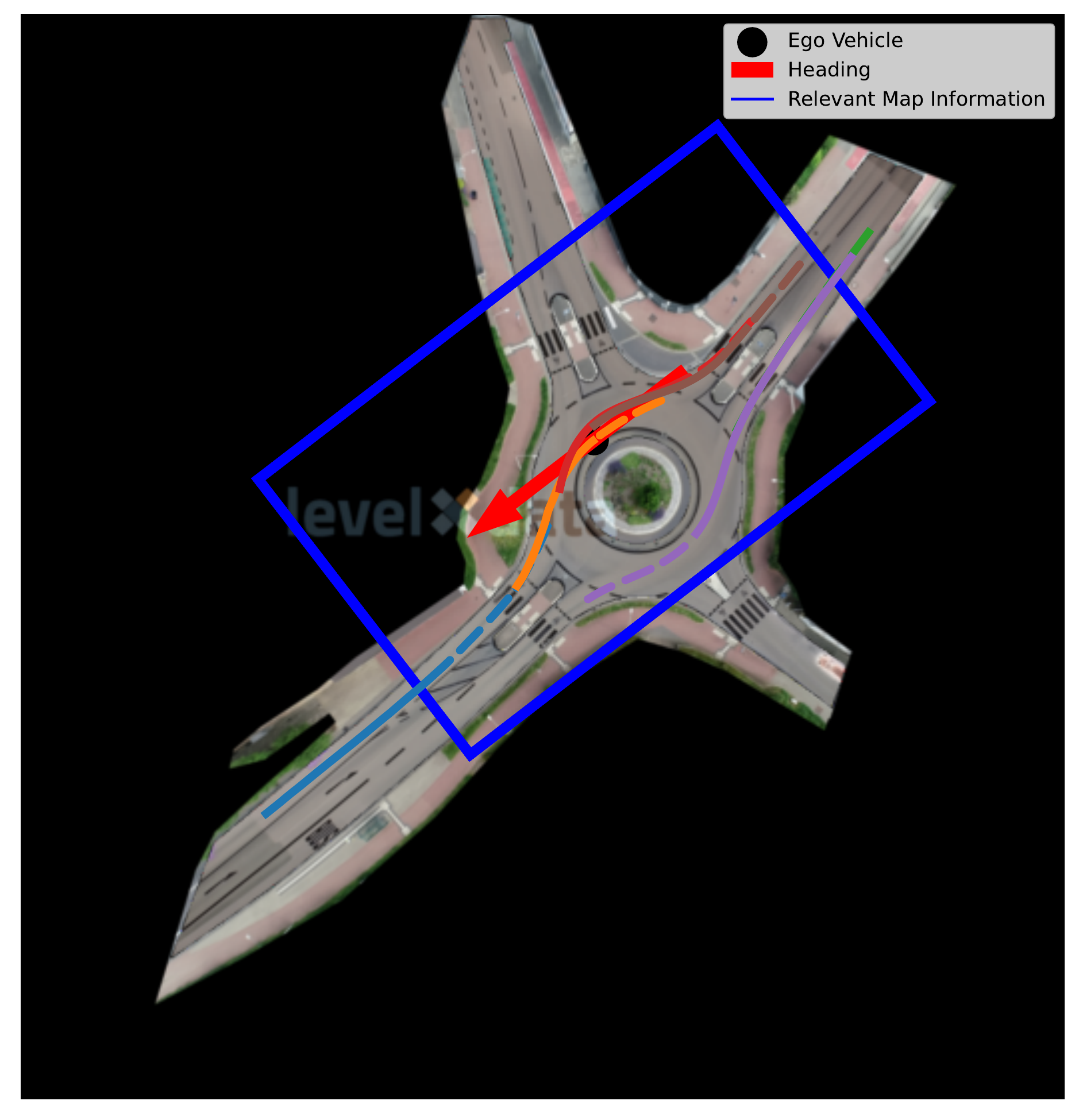}
  \caption{Full map with ego-vehicle.}
  \label{subfig:fullmap}
\end{subfigure}%
\begin{subfigure}{.6\textwidth}
  \centering
  \includegraphics[height=6cm]{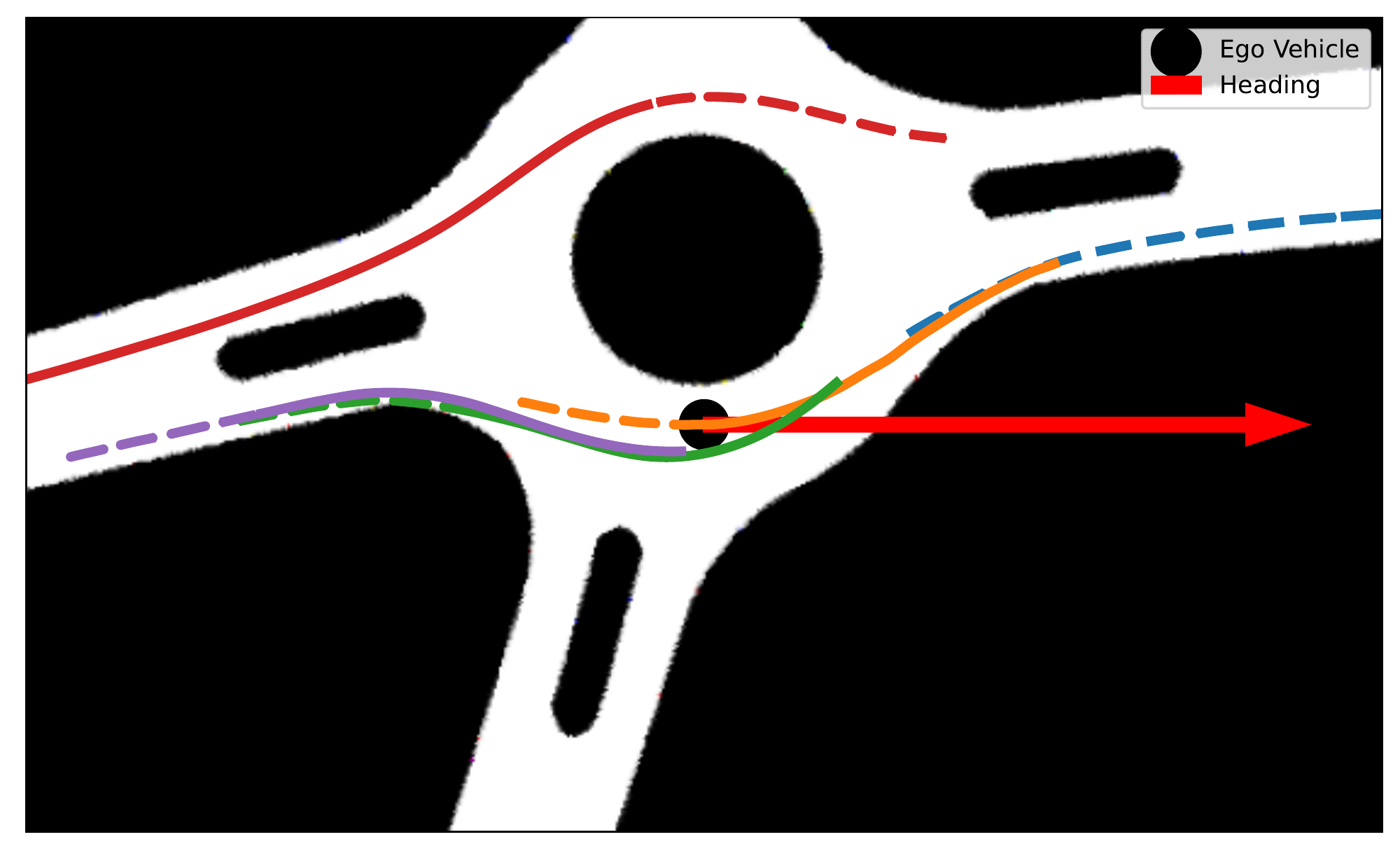}
  \caption{Rotated, cropped, and masked map information.}
    \label{subfig:map_encoded}
\end{subfigure}%
\caption{Processing of map information.  
}
\label{fig:map_processing}
\end{figure}

\newpage 
\section{Network Architectures}
\label{app:architectures}
\subsection{Network Architectures without World Models}
\label{app:architectures_without_world_model}
Below we present the neural network architectures for 
the experiments in Sec. \ref{subsec:experiments_round}, \ref{subsec:experiments_ngsim}, \ref{subsec:experiments_numerical}.
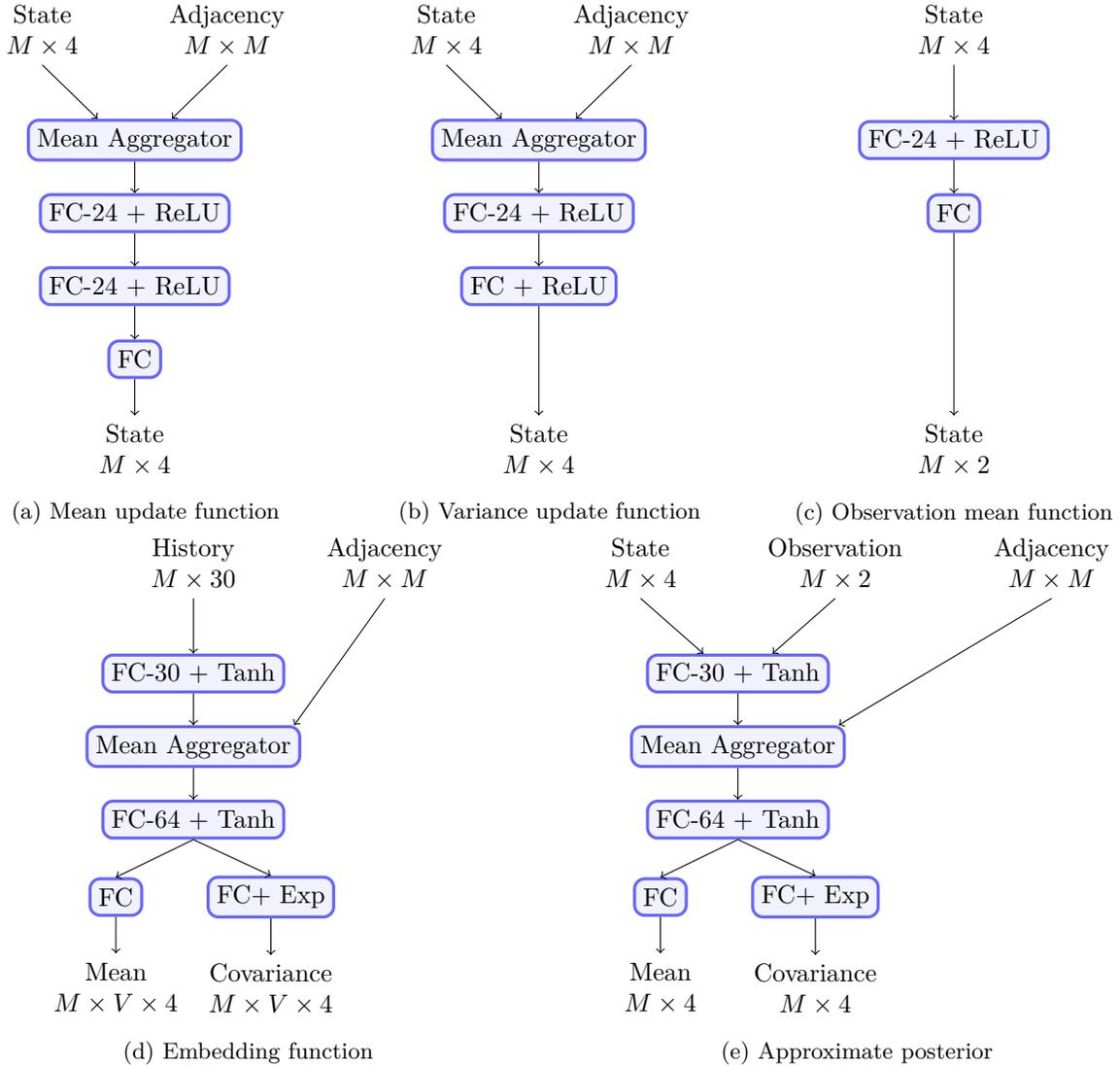
\begin{figure}[h]
\begin{subfigure}{0.33\columnwidth}
\centering
\begin{tikzpicture}[
varnode/.style={rectangle, rounded corners},
layernode/.style={rectangle, draw=blue!60, fill=blue!5, very thick, minimum size=5mm, rounded corners},
]
\node[varnode, align=center] (state) {State\\$M\times4$};
\node[varnode, align=center] (context) [right=of state]{Adjacency\\$M\times M$};
\coordinate (middle) at ($(context)!0.5!(state)$);
\node [layernode, align=center, node distance=1.5cm] (agg) [below of=middle] {Mean Aggregator};
\node [layernode, align=center] (layer1) [below of=agg] {FC-24 + ReLU};
\node [layernode, align=center] (layer2) [below of=layer1] {FC-24 + ReLU};
\node [layernode, align=center] (layer3) [below of=layer2] {FC};
\node[varnode, align=center, node distance=0.5cm] (output)       [below=of layer3] {State\\$M\times4$};
\draw[->] (state.south) -- (agg.150);
\draw[->] (context.south) -- (agg.30);
\draw[->] (agg.south) -- (layer1.north);
\draw[->] (layer1.south) -- (layer2.north);
\draw[->] (layer2.south) -- (layer3.north);
\draw[->] (layer3.south) -- (output.north);
\end{tikzpicture}
\caption{Mean update function}
\end{subfigure}
\begin{subfigure}{0.33\columnwidth}
\centering
\begin{tikzpicture}[
varnode/.style={rectangle, rounded corners},
layernode/.style={rectangle, draw=blue!60, fill=blue!5, very thick, minimum size=5mm, rounded corners},
]
\node[varnode, align=center] (state) {State\\$M\times4$};
\node[varnode, align=center] (context) [right=of state]{Adjacency\\$M\times M$};
\coordinate (middle) at ($(context)!0.5!(state)$);
\node [layernode, align=center, node distance=1.5cm] (agg) [below of=middle] {Mean Aggregator};
\node [layernode, align=center] (layer1) [below of=agg] {FC-24 + ReLU};
\node [layernode, align=center] (layer2) [below of=layer1] {FC + ReLU};
\node[varnode, align=center, node distance=0.5cm] (output)       [below=of layer3] {State\\$M\times4$};
\draw[->] (state.south) -- (agg.150);
\draw[->] (context.south) -- (agg.30);
\draw[->] (agg.south) -- (layer1.north);
\draw[->] (layer1.south) -- (layer2.north);
\draw[->] (layer2.south) -- (output.north);
\end{tikzpicture}
\caption{Variance update function}
\end{subfigure}
\begin{subfigure}{0.33\columnwidth}
\centering
\begin{tikzpicture}[
varnode/.style={rectangle, rounded corners},
layernode/.style={rectangle, draw=blue!60, fill=blue!5, very thick, minimum size=5mm, rounded corners},
]
\node[varnode, align=center] (state) {State\\$M\times4$};
\node [layernode, align=center, node distance=1.5cm] (layer1) [below of=state] {FC-24 + ReLU};
\node [layernode, align=center] (layer2) [below of=layer1] {FC};
\node[varnode, align=center, node distance=3.5cm] (output)       [below=of layer1] {State\\$M\times2$};
\draw[->] (state.south) -- (layer1.north);
\draw[->] (layer1.south) -- (layer2.north);
\draw[->] (layer2.south) -- (output.north);
\end{tikzpicture}
\caption{Observation mean function}
\end{subfigure}

\begin{subfigure}{0.5\columnwidth}
\centering
\begin{tikzpicture}[
varnode/.style={rectangle, rounded corners},
layernode/.style={rectangle, draw=blue!60, fill=blue!5, very thick, minimum size=5mm, rounded corners},
]
\node[varnode, align=center] (history) {History\\$M\times30$};
\node[varnode, align=center] (context) [right=of history]{Adjacency\\$M\times M$};
\coordinate (middle) at ($(context)!0.5!(history)$);
\node [layernode, align=center, node distance=1.5cm] (emb) [below of=state] {FC-30 + Tanh};
\node [layernode, align=center] (agg) [below of=emb] {Mean Aggregator};
\node [layernode, align=center] (msg) [below of=agg] {FC-64 + Tanh};
\node [layernode, align=center, node distance=1.5cm] (mean) [below left of=msg] {FC};
\node [layernode, align=center, node distance=1.5cm] (cov) [below right of=msg] {FC+ Exp};
\node[varnode, align=center, node distance=0.5cm] (mean_output)       [below=of mean] {Mean\\$M\times V \times 4$};
\node[varnode, align=center, node distance=0.5cm] (cov_output)       [below=of cov] {Covariance\\$M\times V \times 4$};

\draw[->] (history.south) -- (emb.north);
\draw[->] (context.south) -- (agg.12);
\draw[->] (emb.south) -- (agg.north);
\draw[->] (agg.south) -- (msg.north);
\draw[->] (msg.south) -- (mean.north);
\draw[->] (msg.south) -- (cov.north);
\draw[->] (mean.south) -- (mean_output.north);
\draw[->] (cov.south) -- (cov_output.north);
\end{tikzpicture}
\caption{Embedding function}
\label{architecture:embedding}
\end{subfigure}
\begin{subfigure}{0.5\columnwidth}
\centering
\begin{tikzpicture}[
varnode/.style={rectangle, rounded corners},
layernode/.style={rectangle, draw=blue!60, fill=blue!5, very thick, minimum size=5mm, rounded corners},
]
\node[varnode, align=center] (state) {State\\$M\times4$};
\node[varnode, align=center] (observation) [right=of state]{Observation\\$M\times 2$};
\node[varnode, align=center] (context) [right=of observation]{Adjacency\\$M\times M$};
\coordinate (middle) at ($(observation)!0.5!(state)$);
\node [layernode, align=center, node distance=1.5cm] (emb1) [below of=middle] {FC-30 + Tanh};
\node [layernode, align=center] (agg) [below of=emb1] {Mean Aggregator};
\node [layernode, align=center] (emb2) [below of=agg] {FC-64 + Tanh};
\node [layernode, align=center, node distance=1.5cm] (mean) [below left of=emb2] {FC};
\node [layernode, align=center, node distance=1.5cm] (cov) [below right of=emb2] {FC+ Exp};
\node[varnode, align=center, node distance=0.5cm] (mean_output)       [below=of mean] {Mean\\$M\times 4$};
\node[varnode, align=center, node distance=0.5cm] (cov_output)       [below=of cov] {Covariance\\$M\times 4$};

\draw[->] (state.south) -- (emb1.150);
\draw[->] (observation.south) -- (emb1.30);
\draw[->] (emb1.south) -- (agg.north);
\draw[->] (agg.south) -- (emb2.north);
\draw[->] (context.south) -- (agg.12);
\draw[->] (emb2.south) -- (mean.north);
\draw[->] (emb2.south) -- (cov.north);
\draw[->] (mean.south) -- (mean_output.north);
\draw[->] (cov.south) -- (cov_output.north);
\end{tikzpicture}
\caption{Approximate posterior}
\label{architecture:posterior}
\end{subfigure}
\caption{Architectures without map information. For fully connected layers we give the number of output neurons.}
\end{figure}

 The embedding function receives the history of all $M$ agents. This history is 3\,seconds long with a time step of $0.2$\,seconds and consists of two dimensional coordinates. After flattening, the input is a vector of  size 30. The output of the embedding function is a GMM with $V$ mixture components. We use the mean aggregator in the embedding, mean and variance update function. The mean aggregator calculates the message to agent $m$ accordingly to Eq. \ref{eq:mean_agg}. After the message $x_t^{\mathcal{N}_m}$ is calculated, it is concatenated  with the state $x_t^{m}$. 
Mean  and variance update functions  are neural networks, which conduct at each prediction step one round of message passing and then calculate the output. Our emissions model uses a neural network for the mean function $g(x_t)$ and a constant vector for $Q(x_t)$. 
The emission model maps the state of each agent back to the observed space and does not depend on interactions. The approximate posterior is used in Sec. \ref{subsec:experiments_round} for the baselines that are trained on the ELBO or MCO. In contrast maximizing the PLL does not necessitate an approximate posterior. The approximate posterior network takes the previous state and the current observation in order to approximate the latent distribution at the current time step.

\subsection{Network Architectures with Map Information}
\label{app:architectures_with_world_model}
These neural net architectures have been used in our experiments in Sec. \ref{subsec:experiments_ood_testing}. It closely follows the architectures in Sec. \ref{app:architectures_without_world_model}. We add an additional neural net, which encodes the masked map of size 500x300 into a flattened 256-dimensional vector. This map embedding is used as an additional input to the embedding function.

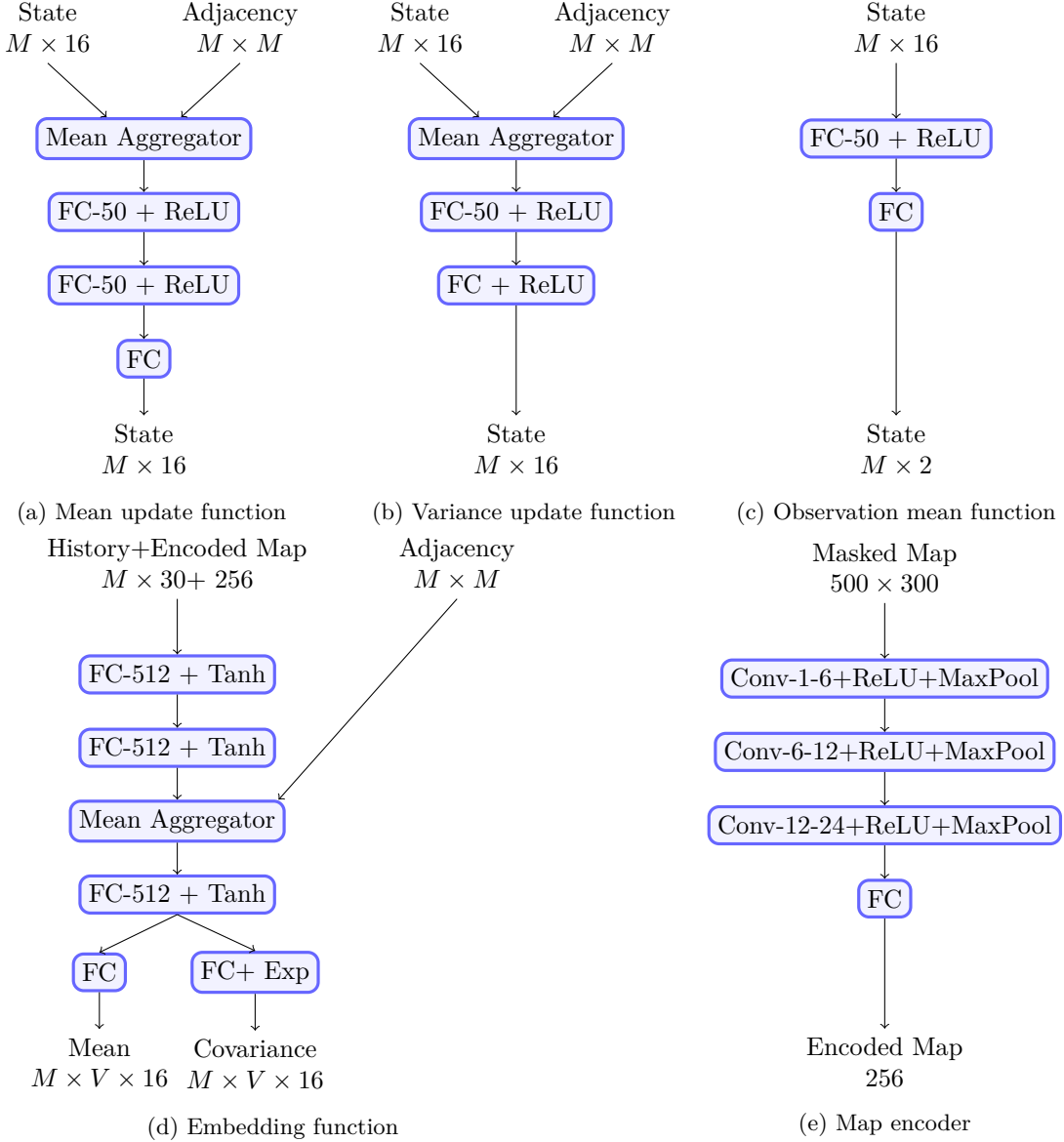
\begin{figure}[h]
\begin{subfigure}{0.3\columnwidth}
\centering
\begin{tikzpicture}[
varnode/.style={rectangle, rounded corners},
layernode/.style={rectangle, draw=blue!60, fill=blue!5, very thick, minimum size=5mm, rounded corners},
]
\node[varnode, align=center] (state) {State\\$M\times16$};
\node[varnode, align=center] (context) [right=of state]{Adjacency\\$M\times M$};
\coordinate (middle) at ($(context)!0.5!(state)$);
\node [layernode, align=center, node distance=1.5cm] (agg) [below of=middle] {Mean Aggregator};
\node [layernode, align=center] (layer1) [below of=agg] {FC-50 + ReLU};
\node [layernode, align=center] (layer2) [below of=layer1] {FC-50 + ReLU};
\node [layernode, align=center] (layer3) [below of=layer2] {FC};
\node[varnode, align=center, node distance=0.5cm] (output)       [below=of layer3] {State\\$M\times16$};
\draw[->] (state.south) -- (agg.150);
\draw[->] (context.south) -- (agg.30);
\draw[->] (agg.south) -- (layer1.north);
\draw[->] (layer1.south) -- (layer2.north);
\draw[->] (layer2.south) -- (layer3.north);
\draw[->] (layer3.south) -- (output.north);
\end{tikzpicture}
\caption{Mean update function}
\end{subfigure}
\begin{subfigure}{0.3\columnwidth}
\centering
\begin{tikzpicture}[
varnode/.style={rectangle, rounded corners},
layernode/.style={rectangle, draw=blue!60, fill=blue!5, very thick, minimum size=5mm, rounded corners},
]
\node[varnode, align=center] (state) {State\\$M\times16$};
\node[varnode, align=center] (context) [right=of state]{Adjacency\\$M\times M$};
\coordinate (middle) at ($(context)!0.5!(state)$);
\node [layernode, align=center, node distance=1.5cm] (agg) [below of=middle] {Mean Aggregator};
\node [layernode, align=center] (layer1) [below of=agg] {FC-50 + ReLU};
\node [layernode, align=center] (layer2) [below of=layer1] {FC + ReLU};
\node[varnode, align=center, node distance=0.5cm] (output)       [below=of layer3] {State\\$M\times16$};
\draw[->] (state.south) -- (agg.150);
\draw[->] (context.south) -- (agg.30);
\draw[->] (agg.south) -- (layer1.north);
\draw[->] (layer1.south) -- (layer2.north);
\draw[->] (layer2.south) -- (output.north);
\end{tikzpicture}
\caption{Variance update function}
\end{subfigure}
\begin{subfigure}{0.3\columnwidth}
\centering
\begin{tikzpicture}[
varnode/.style={rectangle, rounded corners},
layernode/.style={rectangle, draw=blue!60, fill=blue!5, very thick, minimum size=5mm, rounded corners},
]
\node[varnode, align=center] (state) {State\\$M\times16$};
\node [layernode, align=center, node distance=1.5cm] (layer1) [below of=state] {FC-50 + ReLU};
\node [layernode, align=center] (layer2) [below of=layer1] {FC};
\node[varnode, align=center, node distance=3.5cm] (output)       [below=of layer1] {State\\$M\times2$};
\draw[->] (state.south) -- (layer1.north);
\draw[->] (layer1.south) -- (layer2.north);
\draw[->] (layer2.south) -- (output.north);
\end{tikzpicture}
\caption{Observation mean function}
\end{subfigure}

\begin{subfigure}{0.5\columnwidth}
\centering
\begin{tikzpicture}[
varnode/.style={rectangle, rounded corners},
layernode/.style={rectangle, draw=blue!60, fill=blue!5, very thick, minimum size=5mm, rounded corners},
]
\node[varnode, align=center] (history) {History+Encoded Map\\$M\times30$+ $256$};
\node[varnode, align=center] (context) [right=of history]{Adjacency\\$M\times M$};
\coordinate (middle) at ($(context)!0.5!(history)$);
\node [layernode, align=center, node distance=1.5cm] (emb2) [below of=state] {FC-512 + Tanh};
\node [layernode, align=center, node distance=1.cm] (emb) [below of=emb2] {FC-512 + Tanh};
\node [layernode, align=center] (agg) [below of=emb] {Mean Aggregator};
\node [layernode, align=center] (msg) [below of=agg] {FC-512 + Tanh};
\node [layernode, align=center, node distance=1.5cm] (mean) [below left of=msg] {FC};
\node [layernode, align=center, node distance=1.5cm] (cov) [below right of=msg] {FC+ Exp};
\node[varnode, align=center, node distance=0.5cm] (mean_output)       [below=of mean] {Mean\\$M\times V \times 16$};
\node[varnode, align=center, node distance=0.5cm] (cov_output)       [below=of cov] {Covariance\\$M\times V \times 16$};

\draw[->] (history.south) -- (emb2.north);
\draw[->] (context.south) -- (agg.12);
\draw[->] (emb2.south) -- (emb.north);
\draw[->] (emb.south) -- (agg.north);
\draw[->] (agg.south) -- (msg.north);
\draw[->] (msg.south) -- (mean.north);
\draw[->] (msg.south) -- (cov.north);
\draw[->] (mean.south) -- (mean_output.north);
\draw[->] (cov.south) -- (cov_output.north);
\end{tikzpicture}
\caption{Embedding function}
\label{architecture:embedding_map}
\end{subfigure}
\begin{subfigure}{0.5\columnwidth}
\centering
\begin{tikzpicture}[
varnode/.style={rectangle, rounded corners},
layernode/.style={rectangle, draw=blue!60, fill=blue!5, very thick, minimum size=5mm, rounded corners},
]
\node[varnode, align=center] (map_input) {Masked Map\\$500\times300$};
\node [layernode, align=center, node distance=1.5cm] (conv1) [below of=map_input] {Conv-1-6+ReLU+MaxPool};
\node [layernode, align=center] (conv2) [below of=conv1] {Conv-6-12+ReLU+MaxPool};
\node [layernode, align=center] (conv3) [below of=conv2] {Conv-12-24+ReLU+MaxPool};
\node [layernode, align=center,] (fc) [below  of=conv3] {FC};
\node[varnode, align=center, node distance=1.5cm] (map_output)       [below=of fc] {Encoded Map\\$256$};

\draw[->] (map_input.south) -- (conv1.north);
\draw[->] (conv1.south) -- (conv2.north);
\draw[->] (conv2.south) -- (conv3.north);
\draw[->] (conv3.south) -- (fc.north);
\draw[->] (fc.south) -- (map_output.north);
\end{tikzpicture}
\caption{Map encoder}
\label{architecture:map_network}
\end{subfigure}

\caption{Architectures with map information. For fully connected layers we give the number of output neurons. For convolutional layers we give the number of input channels and output channels. We use a kernel size of 3 and stride 2. For MaxPool layers we use a kernel size of 2 and stride 2. }
\end{figure}

\end{document}